\documentclass{article}

\PassOptionsToPackage{numbers,sort&compress}{natbib}

 \usepackage[preprint]{neurips_2026}

\usepackage[utf8]{inputenc} % allow utf-8 input
\usepackage[T1]{fontenc}    % use 8-bit T1 fonts
\usepackage{hyperref}       % hyperlinks
\usepackage{url}            % simple URL typesetting
\usepackage{booktabs}       % professional-quality tables
\usepackage{amsfonts}       % blackboard math symbols
\usepackage{nicefrac}       % compact symbols for 1/2, etc.
\usepackage{microtype}      % microtypography
\usepackage{xcolor}         % colors

\usepackage{amsmath}
\usepackage{amssymb}
\usepackage{mathtools}
\usepackage{amsthm}
\usepackage{ulem}
\usepackage[table]{xcolor}
\usepackage{multirow}
\usepackage{xurl}
\definecolor{ustcblue}{RGB}{0,82,204}    % RGB 0-255
\usepackage{algorithm}
\usepackage{algorithmic}
\usepackage{graphicx} % DO NOT CHANGE THIS
\usepackage{wrapfig}

\theoremstyle{plain}
\newtheorem{theorem}{Theorem}[section]
\newtheorem{proposition}[theorem]{Proposition}
\newtheorem{lemma}[theorem]{Lemma}
\newtheorem{corollary}[theorem]{Corollary}
\theoremstyle{definition}
\newtheorem{definition}[theorem]{Definition}

\theoremstyle{remark}

\title{CSGuard: Toward Forgery-Resistant Watermarking in Diffusion Models via Compressed Sensing Constraint}

\author{Jiewei Lai$^{1}$, Lan Zhang$^{1,2}$\thanks{Corresponding author.}\texttt{ }, Chen Tang$^{1}$, Pengcheng Sun$^{1}$, Zhaopeng Zhang$^{1}$,\\
\textbf{Yunhao Wang$^{3}$,Hui Jin$^{3}$} \\
$^{1}$University of Science and Technology of China, 
$^{2}$Institute of Artificial Intelligence,\\ Hefei Comprehensive National Science Center ,
$^{3}$Lenovo Research \\
}

\begin{document}

\maketitle

\begin{abstract}
Latent-based diffusion model watermarking embeds watermarks into generated images' latent space to enable content attribution, offering a training-free solution for intellectual property protection and digital forensics. 
However, these methods exhibit a critical vulnerability to the forgery attack, attackers can extract the watermark by inverting the watermarked image and re-generating it with an arbitrary prompt, thereby enabling false attribution on malicious content.
In this paper, we propose the CSGuard, the first forgery-resistant watermarking schema that leverages compressed sensing to bind the watermarked image generation and verification to a secret matrix.
This ensures that only users possessing the secret matrix can correctly embed or verify the image watermark, prevents the illegal users from forgery without compromising generation quality and watermark integrity.
Experimental results demonstrate that CSGuard achieves strong forgery resistance, reduces the attack success rate from 100.0\% to 28.12\%, and achieve 100\% detection rate on benign watermarked images without compromising watermarking effectiveness.
% The code is available at \textbf{\textit{https://anonymous.4open.science/r/CSGuard-5740}}.
\end{abstract}

\section{Introduction}
% 研究背景
In recent years, diffusion models (DMs) have witnessed remarkable progress and widespread adoption due to their exceptional generative capabilities \citep{DBLP:conf/cvpr/RombachBLEO22,DBLP:conf/iclr/ChenYGYXWK0LL24,DBLP:conf/iclr/PodellELBDMPR24,DBLP:journals/csur/YangZSHXZZCY24}. These models enable low-cost synthesis of diverse and highly realistic images, bringing transformative opportunities but also significant societal risks. In particular, the ease of access to such powerful generative tools raises serious concerns regarding potential misuse and model copyright infringement, ranging from unauthorized replication of proprietary models to the creation of deceptive or malicious content that could fuel misinformation campaigns \citep{DBLP:journals/tdsc/ZhuLWZWL25, DBLP:conf/icml/CiS0XS25,DBLP:journals/corr/abs-2508-03067,DBLP:journals/corr/abs-2506-23707}.
% 换引用？

% 引出水印
To address these issues,  diffusion models watermarking (DMsMark) have emerged as a promising solution. By subtly modifying either model weights (model-based) or the image generation process (latent-based), such methods embed imperceptible signals into generated images, thereby enabling verifiable attribution of origin and ownership through watermark detection. Among these approaches, latent-based DMsMark, which embeds watermarks in the latent space during generation and verifies them without altering the model, has become a dominant paradigm \citep{DBLP:conf/nips/WenKGG23,DBLP:conf/cvpr/YangZCF0Y24,DBLP:conf/icml/LiHHH25,DBLP:journals/csur/DengLZLPWS26}.
%引用++其他等写相关工作时补齐

% 现有不足
However, latent-based DMsMark methods exhibit a critical vulnerability to forgery attacks (specifically, reprompt attacks) \citep{DBLP:conf/cvpr/0025LTFQ25}, which severely undermines their security and practical reliability. Specifically, as shown in Fig~\ref{fig:csgurad}, an adversary can leverage any off-the-shelf DM to invert a watermarked image (named a benign image) to recover its initial latent variable containing the embedded user watermark through inversion. By reusing this latent variable with a malicious prompt, the attacker can generate a forged image that appears semantically unrelated to the original content. Crucially, during verification, the watermark extracted from this forged image will still be attributed to the legitimate user, leading to false attribution of malicious content. This not only compromises the reliability of the watermarking system but also risks implicating innocent users in harmful activities.

% 问题定义和研究挑战
Therefore, this work aims to develop a latent-based DMsMark mechanism that simultaneously resists forgery attacks while preserving dual fidelity, including watermarking effectiveness and image quality.
% without compromising watermarking effectiveness. 
% The vulnerability of latent-based DMsMark to forgery attacks stems from an intrinsic symmetry inherent in DMs: (1) the approximate invertibility between the generation and the inversion process, and (2) the deterministic mapping between pixel space and latent space via the autoencoder~\citep{DBLP:conf/cvpr/RombachBLEO22,DBLP:conf/nips/HuCWLWSL23,DBLP:conf/nips/HangMCFXFZW24}.
The vulnerability of latent-based DMsMark to forgery attacks stems from an intrinsic symmetry inherent in DMs, which we term the generation–inversion symmetry, which comprises two key properties: (1) the approximate invertibility between the generation and the inversion process, and (2) the deterministic mapping between pixel space and latent space via the autoencoder~\citep{DBLP:conf/cvpr/RombachBLEO22,DBLP:conf/nips/HuCWLWSL23,DBLP:conf/nips/HangMCFXFZW24}.
This symmetry enables the feasibility of watermark embedding and extraction, however, it also constitutes the fundamental critical vulnerability that enables forgery: an adversary can invert a watermarked image using any off-the-shelf DMs to recover the latent representation containing the watermark. Consequently, this structural symmetry, which is central to the functioning of DMsMark, also renders these schemes inherently susceptible to forgery attacks. This inherent conflict between watermarking effectiveness and  anti-forgery constitutes the core challenge in designing such a DMsMark scheme.

% design
% 为了解决这个问题，在这篇论文中，我们提出了CSGuard，据我们所知，这是第一个能够抵抗伪造攻击的DMsMark，能够在保障图像生成质量和水印性能的情况下，抵抗伪造攻击。
% 正如上文所述，The core vulnerability of latent-based DMsMark stems from
% its reliance on the intrinsic symmetry of DMs，因此抗伪造攻击的关键是打破这种对称性。
% 受压缩感知的启发，where only parties possessing the sampling matrix can reconstruct the original signal from its compressed observation，我们提出binding the generation–inversion symmetry to a secret matrix，只有合法的用户即拥有秘密矩阵，才能够利用这种对称性进行带水印图像的生成和验证。
% 具体地，我们通过让生成过程的中间图像的latent表征满足CS一致性从而将秘密矩阵A嵌入到生成过程过程中。再进行验证时，只有让中间图像的latent表征满足统一的CS一致性约束才能够成功地得到初始携带有水印信息的潜变量，这使得攻击者没有A无法获成功进行反演和进行伪造攻击；
% 为了保障DMsMark的图像生成和水印能力，我们利用通过最小扰动投影和“内生”观测值构建，使得投影后的变量既满足一致性约束，又不会偏离原有的生成轨迹，有效得实现了安全性和忠诚性的平衡，在维持原有DMsMark情况下有效抵抗伪造攻击。

To address this challenge, we propose CSGuard, the first DMsMark scheme that is resistant to the forgery attack while preserving both generation quality and watermark effectiveness. 
% through compressed sensing (CS) constraint.
The core vulnerability of latent-based DMsMark lies in its reliance on the generation–inversion symmetry, therefore, the key insight of CSGuard is to break this symmetry. 
We leverage compressed sensing to construct a consistency constraint to enforce that the latent representation of intermediate images satisfies the $\mathbf{A}z_{0|t} \equiv \mathbf{y}$ (named CS constraint), which binds the generation–inversion symmetry to a secret matrix $\mathbf{A}$ during generation.
During verification, the initial watermarked latent variable is recoverable only if the same CS consistency holds across the inversion trajectory. Consequently, an attacker without access to $\mathbf{A}$ cannot perform a valid inversion, thereby preventing forgery.
To ensure minimal impact on generation quality and watermark performance, we further design a minimum-perturbation projection with trajectory-intrinsic observation construction. We derive the observation $\mathbf{y}$ from an intermediate latent state along the original denoising path and use it for subsequent projection. This design guarantees CS consistency while preventing CSGuard from deviating from the original DMs' generation trajectory, thereby preserving both watermark effectiveness and image quality.

% 贡献
Our main contributions are summarized as follows:
\begin{enumerate}
    \item We propose CSGuard, the first forgery-resistant diffusion model watermarking, achieved by breaking the generation–inversion symmetry through compressed sensing constraint, effectively preventing illegal users from exploiting this symmetry for watermark forgery.
    \item We design a consistency-enforcing and dual-fidelity projection through minimum-perturbation projection and trajectory-intrinsic observation, enforcing compressed sensing consistency while preventing CSGuard from deviating from the original trajectory, thus preserving watermarking effectiveness and image quality.
    \item Experimental results demonstrate that CSGuard achieves significant forgery resistance, reduces the attack success rate from 100.0\% (baselines) to 28.12\% without sacrificing  watermarking effectiveness, achieves 100\% detection rate on benign images.
\end{enumerate}
% 绑定将对称性绑定到A，使满足一致性约束而抗伪造攻击；
% 通过最小扰动投影和内生y构建，保障生成质量和水印能力；
% 实验效果

\section{Related Work}

\textbf{Latent-based DMsMark.}
Latent-based DMsMark embeds watermarks into the initial latent variable of DMs and relies on inversion for verification, enabling train-free deployment with minimal visual distortion. 
Tree-Rings \citep{DBLP:conf/nips/WenKGG23} embed watermarks in the Fourier domain using structured ring patterns, a strategy later adapted for multi-user scenarios in RingID \citep{DBLP:conf/eccv/CiYSS24}. 
WIND \citep{DBLP:conf/iclr/ArabiFWHC25} uses a two-stage framework: Fourier-based group retrieval followed by initial noise matching, enabling efficient, distortion-free watermark verification.
However, such frequency-domain manipulations perturb the latent distribution, deviating from the model’s native Gaussian prior.
In response, a second line of work prioritizes distributional fidelity. GaussShading \citep{DBLP:conf/cvpr/YangZCF0Y24} preserves the latent distribution by sampling watermark-carrying points from equal-probability partitions of the Gaussian density, while PRCMark \citep{DBLP:conf/iclr/GunnZS25} maintains magnitudes and only flips latent signs according to the watermark sequence, both ensuring statistical indistinguishability without altering the generative process. 
More recently, efforts have shifted toward robustness against image perturbations. GaussMarker \citep{DBLP:conf/icml/LiHHH25} introduces dual-domain embedding (symbol and frequency), CoSDA \citep{DBLP:conf/aaai/FangCYC0C25} mitigates internal diffusion drift and external noise via compensatory sampling and alignment, and TraceMark \citep{DBLP:journals/corr/abs-2503-23332} employs binary-guided resampling of high-magnitude latent components to minimize reconstruction error while preserving Gaussianity.

Despite their advances, these methods share a critical dependency: they utilize DMs' inversion for verification, which simultaneously enables adversaries to recover watermarked latent variable with arbitrary models and forge malicious content. This fundamental vulnerability, inherent to the symmetric architecture of DMs, remains unaddressed.

We discuss model-based methods and the concurrent work PAI~\citep{DBLP:journals/corr/abs-2601-06639}, along with detailed comparisons in Appendix~\ref{app:related_work}, as they differ fundamentally in security objectives and deployment assumptions.

\section{Preliminaries}

\subsection{Diffusion Models and Inverse DDIM}
\textbf{Generation.} DMs synthesize images $\mathcal{G}_{T \to 0}(z_T; u_{\theta}, c)$ with the conditioning prompt $c$ by iteratively denoising a random noise sample $z_T \sim \mathcal{N}(0, \mathbf{I})$. 
At each denoising step $t$, the network $u_{\theta}(\cdot)$ predicts $z_{0|t}$, an estimate of the clean latent $z_0$ from the noisy input $z_t$.
And then computes:
$
    z_{t-1} = \sqrt{\alpha_{t-1}}\, z_{0|t} + \sqrt{1 - \alpha_{t-1}}, u_{\theta}(z_t, t),
$
for $t = T, \dots, 1$, where $\alpha_t = \prod_{s=0}^t (1 - \beta_s)$. The final latent $z_0$ is decoded to an image through decoder $x = \mathcal{D}(z_0)$.

\textbf{Inversion.} Given an image $x$, its latent $z_0 = \mathcal{E}(x)$ can be inverted back to the initial variable via inverse DDIM sampling \citep{DBLP:conf/nips/HoJA20}:
$
    z_{t+1} = \sqrt{\alpha_{t+1}}\, z_{0|t} + \sqrt{1 - \alpha_{t+1}}\, u_{\theta}(z_t, t),
$
for $t = 0, \dots, T-1$, yielding $z_T = \mathcal{I}_{0 \to T}(z_0; u_{\theta})$. This bidirectional mapping, grounded in the estimation of $\hat{z}_0$, enables both synthesis and latent recovery.

\subsection{Forgery Attack}
Given a watermarked image $x^{\text{w}}$ generated by a legitimate user under prompt $c$, an adversary can perform a forgery attack (reprompt attack) \citep{DBLP:conf/cvpr/0025LTFQ25} as follows. 
First, the attacker inverts $x^{\text{w}}$ to recover its initial latent variable $\tilde{z}_T = \mathcal{I}_{0 \to T}(\mathcal{E}(x^{\text{w}}); u_{\theta})$ using any off-the-shelf DM. 
Subsequently, the attacker selects a malicious prompt $c^{\text{adv}}$ and synthesizes a forged image:
$
    x^{\text{adv}} = \mathcal{D}\big( \mathcal{G}_{T \to 0}(\tilde{z}_T; u_{\theta}, c^{\text{adv}}) \big).
$
The resulting image $x^{\text{adv}}$ exhibits semantic content aligned with $c^{\text{adv}}$, while preserving the latent signature originally embedded by the legitimate user. 
Consequently, the watermark verification mechanism will erroneously attribute $x^{\text{adv}}$ to the user. 
This attack succeeds without requiring access to the original generation model or knowledge of the watermark method, highlighting a fundamental vulnerability in latent-based watermark schemes.

\subsection{Compression Sensing}
Compressed sensing (CS) \citep{DBLP:journals/tit/Donoho06, DBLP:journals/pami/ChenZLZYZCZ25} is a signal acquisition paradigm that enables direct acquisition of a compressed representation $\mathbf{y} \in \mathbb{R}^M$ of a high-dimensional signal $\mathbf{x} \in \mathbb{R}^N$ via the linear measurement model:
$
    \mathbf{y} = \mathbf{A} \mathbf{x},
$
where $\mathbf{A} \in \mathbb{R}^{M \times N}$ is a random sample matrix, the compression ratio is $\gamma = M/N < 1$.
Recovery of $\mathbf{x}$ from $\mathbf{y}$ is feasible only when the matrix $\mathbf{A}$ is known. Without access to $\mathbf{A}$, the inverse problem $\mathbf{y} \mapsto \mathbf{x}$ is ill-posed and admits infinitely many solutions. Hence, $\mathbf{A}$ serves as a recovery key: only parties possessing $\mathbf{A}$ can recover $\mathbf{x}$ from its compressed observation $\mathbf{y}$.

\section{Methodology}

This section formalizes the design of CSGuard.  
% Guided by the insight that latent-based watermarking’s vulnerability stems from exploitable generation–inversion symmetry,
DMsMark's generation–inversion symmetry renders it vulnerable to forgery attack, therefore, we (1) propose to leverage compressed sensing to bind this symmetry to a secret matrix, thus breaking this symmetry for robust against forgery attack, and (2) propose a consistency-enforcing and dual-fidelity projection to satisfy CS constraint and preserve the watermarking fidelity. 
Together, they enable CSGuard to resist forgery and preserve generation quality and watermarking effectiveness.
All proofs are detailed in ~\ref{app:proof}.

% \begin{figure}[tb]
%   \begin{center}
%     \centerline{\includegraphics[width=\columnwidth]{fig/framework.pdf}}
%     \caption{
%       Framework of our CSGuard.\textbf{false}%公式有错
%     }
%     \label{icml-historical}
%   \end{center}
%   \vskip -0.1in
% \end{figure}

\begin{figure*}[tb]
  \begin{center}
    \centerline{\includegraphics[width=0.9\linewidth]{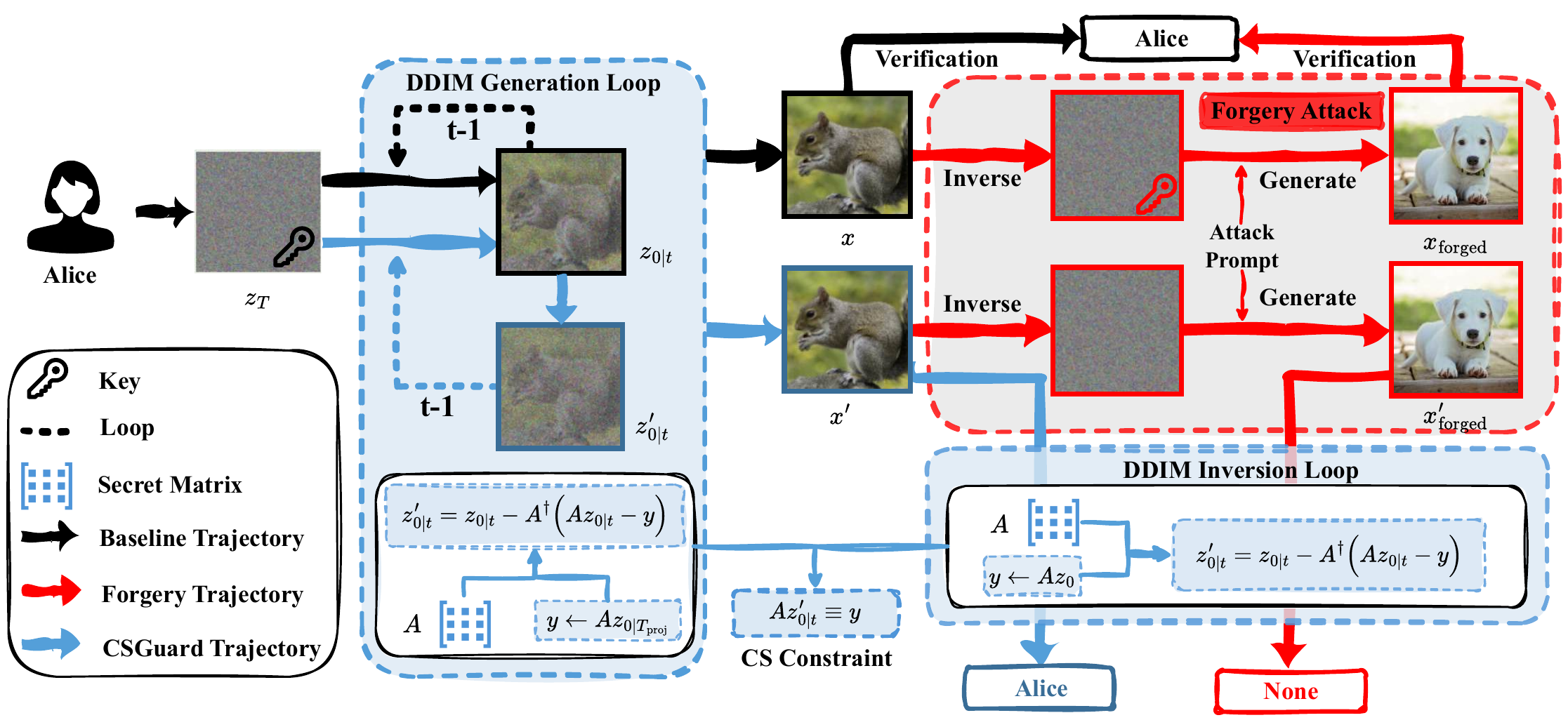}}
    \caption{
      Overview of CSGuard.
      By enforcing CS consistency on intermediate latents, CSGuard binds generation–inversion symmetry to a secret matrix, thereby preventing adversaries from exploiting this symmetry to forge, rendering it robust against forgery attacks.
    }
    \label{fig:csgurad}
  \end{center}
  \vskip -0.4in
\end{figure*}

\subsection{Threat Model}
\textbf{Adversary’s Goals and Capabilities.}
The adversary aims to steal the victim’s watermark from a watermarked image and use it to forge a new image that verifies as containing the victim’s watermark.
This enables attackers to generate malicious content without being traced and even frame innocent parties.
% This enables false attribution, undermining the trustworthiness of the watermarking system. 
The attacker has their own DMs but does not know the victim’s DM or details of DMsMark.

\textbf{Defender's Goal and Capabilities.}
The defender aims to embed user-specific watermarks into generated images without degrading visual fidelity, and to minimize the success rate of forgery attacks, preventing adversaries from forging images that falsely verify under a victim’s watermark.

% \subsection{Overview}
% 还得改

\subsection{Design Principle}
% \subsection{Anti-Forgery through CS Constraint}
The core vulnerability of latent-based DMsMark stems from its reliance on the intrinsic symmetry of DMs. Specifically, the approximate invertibility between generation and inversion. While this symmetry enables reliable watermark extraction for legitimate users, it also permits adversaries to exploit the same mechanism to forge content by reusing initial watermarked latent variables.

% \textit{\uline{
Therefore, the key to ensuring watermark performance and resisting forgery attacks lies in guaranteeing the legitimate users' ability to utilize symmetry while making it inaccessible to illegal users.
In CS, this issue is resolved by design: the reconstruction symmetry between $\mathbf{y}$ and $\mathbf{x}$ is conditional, it only holds for parties who possess the sensing matrix $\mathbf{A}$.
We transpose this principle to DMsMark by binding the generation–inversion symmetry to a secret $\mathbf{A}$. By ensuring the watermark verification is functional for legitimate users while infeasible for adversaries, we can effectively convert a public vulnerability into a private capability.

Specifically, we propose to embed a secret matrix $\mathbf{A}$ into the generative process, constraining the generative images' latent representations satisfy the CS constraint $\mathbf{A} z_{0|t} \equiv \mathbf{y}$ (see Fig~\ref{fig:csgurad}). During verification, only the legitimate user who possesses $\mathbf{A}$ can impose the same constraint in inversion, thereby preserving the generation–inversion symmetry and the watermark’s validity. Unauthorized users who lack $\mathbf{A}$ are unable to exploit the model’s symmetry for forgery.
Moreover, legitimate users are strongly incentivized to safeguard $\mathbf{A}$, as its disclosure would enable adversaries to forge watermarked content under the user’s identity, leading to false attribution and potential liability.

\subsection{Consistency-Enforcing and Dual-Fidelity Projection}

% Inspired by the principle of CS constrained generation-inverse symmetry, 
To enable CSGuard to generate images that are verifiable while defending against forgery attack, we formalize two essential constraints:
(1) \textit{Consistency Constraint:} $\mathbf{A} z_{0|t} \equiv \mathbf{y}$, which binds the generation–inversion symmetry to a secret $\mathbf{A}$; and
(2) \textit{Fidelity Constraint:} Guarantees the quality of the generative image and the effectiveness of the watermark.
To simultaneously satisfy consistency and fidelity constraints, we propose a consistency-enforcing and dual-fidelity projection that combines a minimal-perturbation consistency projection with trajectory-intrinsic observation construction.

\textbf{Minimal-Perturbation Consistency Projection.}
Motivated  by the range-null space decomposition in image restoration \citep{DBLP:conf/aaai/WangHYZ23,DBLP:conf/iclr/WangYZ23,DBLP:journals/pami/ChenZLZYZCZ25}, 
% we introduce a projection operator that satisfies the consistency constraints and allows us to optimize the watermark and perceptual fidelity freely.Specifically, 
we formalize the consistency-enforcing projection as an orthogonal projection onto an affine constraint set. thus ensuring constraint satisfaction with minimal perturbation to the latent state, which is crucial for preserving dual quality.

\begin{definition}[Consistency Constraint Set]
Given a full-rank measurement matrix $\mathbf{A} \in \mathbb{R}^{M\times N}$ and denote its Moore-Penrose pseudoinverse as $\mathbf{A}^\dagger = \mathbf{A}^\top (\mathbf{A} \mathbf{A}^\top)^{-1}$, and an observation $\mathbf{y} \in \mathbb{R}^M$ , the consistency constraint set is the affine subspace 
$\mathcal{M}_{\mathbf{A,y}}:=z_{0|t}-\mathbf{A}^\dagger(\mathbf{A}z_{0|t}-\mathbf{y})$.
\end{definition}

\begin{lemma}[Minimal-Perturbation Consistency Projection]
\label{lemma:lemma1}
Let $\mathbf{A} \in \mathbb{R}^{M\times N}$ with full row rank, and denote its Moore-Penrose pseudoinverse as $\mathbf{A}^\dagger = \mathbf{A}^\top (\mathbf{A} \mathbf{A}^\top)^{-1}$. For any intermediate latent $z_{0|t} \in \mathbb{R}^d$, the projected latent is:
$
     z'_{0|t} = z_{0|t} - \mathbf{A}^\dagger(\mathbf{A} z_{0|t} - \mathbf{y}),
    \label{eq:projection}
$
Then:
    (i) $z'_{0|t}$ satisfies the consistency constraint: $\mathbf{A} z'_{0|t} \equiv \mathbf{y}$.
    (ii) The mapping $z_{0|t} \mapsto z'_{0|t}$ satisfies the minimal perturbation:   
    $z'_{0|t}=\arg \min _{z\in\mathcal{M}_{\mathbf{A,y}}}||z'_{0|t}-z_{0|t}||_2$.
    (iii) $z'_{0|t}$ shares the same null component with $z_{0|t}$.
    % (ii) The mapping $z_{0|t} \mapsto z'_{0|t}$ is the unique orthogonal projection onto the affine subspace $\mathcal{M}_{\mathbf{A}, \mathbf{y}} := \{ z \in \mathbb{R}^d : \mathbf{A} z = \mathbf{y} \}$.
    % (iii) The projection decomposes $z'_{0|t}$ into orthogonal components.
\end{lemma}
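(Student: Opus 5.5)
The plan is to reduce everything to linear algebra of $\mathbf{A}$ and the orthogonal decomposition $\mathbb{R}^N=\mathrm{Range}(\mathbf{A}^\top)\oplus\mathrm{Null}(\mathbf{A})$, with $d=N$ so the products make sense. Throughout I read $\mathcal{M}_{\mathbf{A},\mathbf{y}}$ as the affine set $\{z:\mathbf{A}z=\mathbf{y}\}$; this is clearly the intended meaning of the Definition. Full row rank makes $\mathbf{A}\mathbf{A}^\top$ invertible, so $\mathbf{A}^\dagger$ is well defined, and two identities carry the whole argument. First, $\mathbf{A}\mathbf{A}^\dagger=\mathbf{I}_M$. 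Second, $P:=\mathbf{A}^\dagger\mathbf{A}$ is the orthogonal projector onto $\mathrm{Range}(\mathbf{A}^\top)$: it is symmetric and idempotent, its range lies in $\mathrm{Range}(\mathbf{A}^\top)$, and it fixes every vector $\mathbf{A}^\top w$. Consequently $\mathbf{I}-P$ is the orthogonal projector onto $\mathrm{Null}(\mathbf{A})$.

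For (i), multiply the projection formula by $\mathbf{A}$ and use $\mathbf{A}\mathbf{A}^\dagger=\mathbf{I}_M$:
\[
\mathbf{A}z'_{0|t}=\mathbf{A}z_{0|t}-(\mathbf{A}z_{0|t}-\mathbf{y})=\mathbf{y}.
\]
For (iii), apply $\mathbf{I}-P$ to both sides of the formula. Since $(\mathbf{I}-P)\mathbf{A}^\dagger=\mathbf{A}^\dagger-\mathbf{A}^\dagger\mathbf{A}\mathbf{A}^\dagger=0$, we get $(\mathbf{I}-P)z'_{0|t}=(\mathbf{I}-P)z_{0|t}$. So the two vectors share the same null-space component, and only the range component is replaced, namely by $\mathbf{A}^\dagger\mathbf{y}$.

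For (ii), take any $z$ with $\mathbf{A}z=\mathbf{y}$ and write $z-z_{0|t}=(z-z'_{0|t})+(z'_{0|t}-z_{0|t})$. By (i), $\mathbf{A}(z-z'_{0|t})=0$, so the first term lies in $\mathrm{Null}(\mathbf{A})$. The second term equals $-\mathbf{A}^\top(\mathbf{A}\mathbf{A}^\top)^{-1}(\mathbf{A}z_{0|t}-\mathbf{y})$, which lies in $\mathrm{Range}(\mathbf{A}^\top)$. These subspaces are orthogonal, so Pythagoras gives
\[
\|z-z_{0|t}\|_2^2=\|z-z'_{0|t}\|_2^2+\|z'_{0|t}-z_{0|t}\|_2^2 .
\]
Hence $z'_{0|t}$ minimizes the distance over $\mathcal{M}_{\mathbf{A},\mathbf{y}}$, and it is the unique minimizer because equality forces $z=z'_{0|t}$. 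As an alternative, the same conclusion follows from a Lagrangian/KKT computation on the strictly convex problem.

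The only delicate point is interpretive rather than technical. The Definition writes $\mathcal{M}_{\mathbf{A},\mathbf{y}}$ as a formula instead of the constraint set $\{z:\mathbf{A}z=\mathbf{y}\}$, and (ii) has a typo, with $z'_{0|t}$ appearing inside the argmin in place of $z$. I will state these readings explicitly; once they are fixed, each step is a single line.
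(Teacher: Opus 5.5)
Your proposal is correct and follows essentially the same route as the paper: the range--null decomposition $z'_{0|t}=\mathbf{A}^\dagger\mathbf{y}+(\mathbf{I}-\mathbf{A}^\dagger\mathbf{A})z_{0|t}$, a direct computation with $\mathbf{A}\mathbf{A}^\dagger=\mathbf{I}$ for (i), the residual lying in $\mathrm{Range}(\mathbf{A}^\top)\perp\mathrm{Null}(\mathbf{A})$ for (ii), and a comparison of null-space components for (iii). Your explicit Pythagorean identity (which also gives uniqueness) and your reading of $\mathcal{M}_{\mathbf{A},\mathbf{y}}$ as $\{z:\mathbf{A}z=\mathbf{y}\}$ just make precise what the paper's sketch leaves implicit.
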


%必然的结果
\begin{corollary}[Invariance under Null-Space Optimization]
\label{coro:coro1}
Since the projection operator leaves the original null-space component of $z_{0|t}$ unchanged, any perturbation $\Delta z \in \textit{Null}(\mathbf{A})$ preserves the satisfaction of the constraint $\mathbf{A} z \equiv \mathbf{y}$. In other words, the consistency constraint is invariant to arbitrary modifications within $\textit{Null}(\mathbf{A})$.
\end{corollary}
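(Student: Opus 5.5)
The proof rests on Lemma~\ref{lemma:lemma1}, mainly part (iii), together with the linearity of $\mathbf{A}$. Throughout we use the orthogonal decomposition $\mathbb{R}^N = \mathrm{Range}(\mathbf{A}^\top) \oplus \textit{Null}(\mathbf{A})$. The corresponding orthogonal projector onto the row space is $P_r := \mathbf{A}^\dagger\mathbf{A}$, and the projector onto the null space is $P_n := \mathbf{I}-\mathbf{A}^\dagger\mathbf{A}$. Full row rank makes $\mathbf{A}\mathbf{A}^\top$ invertible, and the identity $\mathbf{A}\mathbf{A}^\dagger = \mathbf{I}_M$ gives $P_r^2=P_r$, $P_r^\top = P_r$ and $\mathbf{A}P_n = 0$.

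\textbf{Step 1: the projection leaves the null component unchanged.} Apply $P_n$ to $z'_{0|t} = z_{0|t} - \mathbf{A}^\dagger(\mathbf{A}z_{0|t}-\mathbf{y})$. Since $P_n\mathbf{A}^\dagger = \mathbf{A}^\dagger - \mathbf{A}^\dagger\mathbf{A}\mathbf{A}^\dagger = 0$, we get $P_n z'_{0|t} = P_n z_{0|t}$. This is exactly Lemma~\ref{lemma:lemma1}(iii), and it shows the projection modifies only the row-space part, setting it to $\mathbf{A}^\dagger\mathbf{y}$.

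\textbf{Step 2: null-space perturbations preserve the constraint.} Take any $z$ with $\mathbf{A}z=\mathbf{y}$, for instance $z=z'_{0|t}$, which satisfies this by Lemma~\ref{lemma:lemma1}(i). For any $\Delta z\in\textit{Null}(\mathbf{A})$, linearity gives $\mathbf{A}(z+\Delta z) = \mathbf{A}z + \mathbf{A}\Delta z = \mathbf{y}+0=\mathbf{y}$. So $z+\Delta z\in\mathcal{M}_{\mathbf{A},\mathbf{y}}$.

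\textbf{Step 3: re-projection fixes such a perturbed point.} Applying the projection operator to $z+\Delta z$ returns the point itself, because the residual $\mathbf{A}(z+\Delta z)-\mathbf{y}$ is zero. Equivalently, by Step 1 the operator replaces the row-space component by $\mathbf{A}^\dagger\mathbf{y}$, which $z+\Delta z$ already has since $P_r\Delta z=0$, and it keeps the null component $P_n z + \Delta z$. Hence any optimization step restricted to $\textit{Null}(\mathbf{A})$, whether it happens before or after the projection, commutes with the constraint, which is the stated invariance.

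\textbf{Main obstacle.} The only delicate point is notational rather than mathematical. The paper's definition writes $\mathcal{M}_{\mathbf{A},\mathbf{y}}$ as a single point rather than as the set $\{z:\mathbf{A}z=\mathbf{y}\}$, and it writes the latent dimension as $d$ where $\mathbf{A}$ acts on $\mathbb{R}^N$. I would fix the interpretation $\mathcal{M}_{\mathbf{A},\mathbf{y}}=\{z\in\mathbb{R}^N:\mathbf{A}z=\mathbf{y}\}$ with $d=N$ before running Steps 1--3. After that, every step is a one-line consequence of $\mathbf{A}\mathbf{A}^\dagger=\mathbf{I}_M$.
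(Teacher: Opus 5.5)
Your proof is correct and follows essentially the paper's reasoning. The paper gives no separate proof; it treats the corollary as immediate from part (iii) of the Minimal-Perturbation Consistency Projection lemma, i.e.\ from the decomposition $z'_{0|t}=\mathbf{A}^\dagger\mathbf{y}+(\mathbf{I}-\mathbf{A}^\dagger\mathbf{A})z_{0|t}$ together with $\mathbf{A}$ annihilating $\textit{Null}(\mathbf{A})$, which is exactly your Steps 1--2 (the linearity computation in Step 2 already suffices on its own), and your reading of $\mathcal{M}_{\mathbf{A},\mathbf{y}}$ as the affine set $\{z:\mathbf{A}z=\mathbf{y}\}$ with $d=N$ is the right repair of the paper's notation.
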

% (I−A†A)z 就是原始 z 在 null space 中的分量，且该分量在投影后被完整保留。即只修改range分量来满足一致性（z=A*AZ+NULL,z'=A*y+NULL）

This property enables us to enforce CS consistency with minimal perturbation regardless of how $z_{0|t}$ is sampled or the embedded watermark.
Furthermore, the full flexibility in the null space allows DMsMark to iteratively optimize its null-space component, thus preserving both generative capability and watermark effectiveness~\citep{DBLP:conf/iclr/WangYZ23}.

\textbf{Trajectory-Intrinsic Observation Construction.}
The minimal-perturbation projection enables CSGuard to satisfy the CS consistency constraint with minimal perturbation. However, satisfying the fidelity constraint requires more than mere adherence to consistency; crucially, the projected latent variable $z'_{0|t}$ must remain in close proximity to its original generative trajectory, thereby preserving the semantic coherence and watermark integrity of the generated image.

We identify that naive observations, such as a  random matrix, severely degrade the generation quality and propose constructing $\mathbf{y}$ from an intermediate latent state along the original denoising path to preserve dual fidelity:

% To keep the projected latent on the generative trajectory, we construct $\mathbf{y}$ from an intermediate latent state along the original denoising path via a trajectory-intrinsic observation.

\begin{proposition}[Random Observation Destroys Generation Fidelity]
\label{proposition:proposition1}
    Let $\mathbf{y} \sim \mathcal{N}(0, I)$ be a randomly sampled observation. Then the projected latent significantly deviates from the original trajectory and destroys the generation fidelity. 
\end{proposition}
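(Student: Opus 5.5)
We make the informal statement precise by quantifying the deviation of the projected latent from the original one. By Lemma~\ref{lemma:lemma1}, the perturbation introduced by the projection is
\[
\Delta := z'_{0|t} - z_{0|t} = -\mathbf{A}^\dagger(\mathbf{A} z_{0|t} - \mathbf{y}),
\]
and it lies entirely in the range of $\mathbf{A}^\top$. Its squared norm is
\[
\|\Delta\|_2^2 = (\mathbf{A} z_{0|t}-\mathbf{y})^\top (\mathbf{A}\mathbf{A}^\top)^{-1}(\mathbf{A} z_{0|t}-\mathbf{y}).
\]
The plan is to show that this quantity is of order $M$ in expectation when $\mathbf{y}\sim\mathcal{N}(0,I)$ is drawn independently of the trajectory. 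We then compare it with the natural scale of the range-space component of $z_{0|t}$, showing the deviation is comparable to, or larger than, the signal itself.

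The first step is to fix the model for $\mathbf{A}$. We take i.i.d.\ Gaussian entries with variance $1/N$, the standard CS choice, so that $\mathbf{A}\mathbf{A}^\top \approx I_M$ with high probability by concentration of Wishart matrices when $M/N=\gamma<1$. Then $\|\Delta\|_2^2 \approx \|\mathbf{A} z_{0|t} - \mathbf{y}\|_2^2$. Since $\mathbf{y}$ is independent of $z_{0|t}$ and has zero mean, the cross term vanishes in expectation, giving
\[
\mathbb{E}\|\Delta\|_2^2 \approx \mathbb{E}\|\mathbf{A} z_{0|t}\|_2^2 + M .
\]
The second step is the comparison. The range component of the clean estimate, $\mathbf{A}^\dagger\mathbf{A} z_{0|t}$, has norm about $\|\mathbf{A} z_{0|t}\|$. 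The projection therefore replaces it with $\mathbf{A}^\dagger\mathbf{y}$, which is an isotropic Gaussian in an $M$-dimensional subspace and is statistically independent of the image content. Consequently, the fraction $\gamma$ of the latent's degrees of freedom, namely the range component, is overwritten by pure noise. The relative error $\|\Delta\|/\|z_{0|t}\|$ is then bounded below by roughly $\sqrt{\gamma}$ times a constant, and does not vanish as $N$ grows.

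The third step propagates this deviation along the trajectory. Because the projected $z'_{0|t}$ enters the DDIM update
\[
z_{t-1} = \sqrt{\alpha_{t-1}}\, z'_{0|t} + \sqrt{1-\alpha_{t-1}}\, u_\theta(z_t,t),
\]
the perturbation $\sqrt{\alpha_{t-1}}\,\Delta$ is injected at every step. For small $t$, $\alpha_{t-1}\to 1$, so the injected error is not attenuated. Moreover, because the observation $\mathbf{y}$ is fixed while $z_{0|t}$ evolves, the perturbation is re-applied at every step, and the final $z_0$ satisfies $\mathbf{A} z_0 = \mathbf{y}$ exactly. The range component of the output is thus pure noise, and $z_0$ differs from the unconstrained output $z_0^{\text{orig}}$ by at least $\|\mathbf{A}^\dagger(\mathbf{A} z_0^{\text{orig}} - \mathbf{y})\|$, which is $\Omega(\sqrt{M})$ with high probability.

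The main obstacle is translating a latent-space $\ell_2$ deviation into a claim about destroyed fidelity, since the decoder $\mathcal{D}$ and perceptual quality are not formally modelled in the paper. I would handle this by defining fidelity loss as the deviation $\|z_0 - z_0^{\text{orig}}\|$ relative to $\|z_0^{\text{orig}}\|$, or as the distance of $z_0$ from the data manifold. I would then argue that injecting an $M$-dimensional isotropic Gaussian component, with $M=\gamma N$, moves $z_0$ off that manifold. The informal conclusion would be supported by the experimental evidence rather than by a fully rigorous perceptual bound.
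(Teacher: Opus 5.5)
Your argument rests on the same idea as the paper's sketch. A random $\mathbf{y}$ carries no information about the image, the projection forces $z_{0|t}$ onto the corresponding affine set, and because $z'_{0|t}$ is fed back into the DDIM update, the trajectory is dragged along. The paper stops at the qualitative claim that this set is ``uncorrelated with the semantic manifold,'' whereas you quantify it. Via $z'_{0|t}=\mathbf{A}^\dagger\mathbf{y}+(\mathbf{I}-\mathbf{A}^\dagger\mathbf{A})z_{0|t}$ you identify exactly what is lost: the whole range-space component, replaced by content-independent noise. Your final-state argument is also new relative to the paper. From $\mathbf{A}z_0=\mathbf{y}$ and part (ii) of the projection lemma, $\|z_0-z_0^{\text{orig}}\|_2\ge\|\mathbf{A}^\dagger(\mathbf{A}z_0^{\text{orig}}-\mathbf{y})\|_2$, a lower bound that holds however the network reacts to the perturbation. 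It also sets up a clean contrast with the trajectory-intrinsic observation, where the same perturbation is $\mathbf{A}^\dagger\mathbf{A}(z_{0|T_{proj}}-z_{0|t})$ and is small exactly when the clean estimate has stabilized. Both versions share the same informal last step from latent deviation to perceptual fidelity, which you rightly flag.

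Two steps need repair, though neither changes the conclusion.

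First, $\mathbf{A}\mathbf{A}^\top\approx I_M$ is false at constant ratio $\gamma$. By Marchenko--Pastur its spectrum fills $[(1-\sqrt{\gamma})^2,(1+\sqrt{\gamma})^2]$, roughly $[0.011,3.6]$ for $\gamma=0.8$, so $\mathbf{A}^\dagger\mathbf{y}$ is not isotropic. You can drop both the approximation and any normalization of $\mathbf{A}$. Averaging over $\mathbf{y}$ alone, $\mathbb{E}_{\mathbf{y}}\|\Delta\|_2^2=\|\mathbf{A}^\dagger\mathbf{A}z_{0|t}\|_2^2+\operatorname{tr}\big((\mathbf{A}\mathbf{A}^\top)^{-1}\big)\ge\|\mathbf{A}^\dagger\mathbf{A}z_{0|t}\|_2^2\approx\gamma\|z_{0|t}\|_2^2$. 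This holds because the row space of a Gaussian $\mathbf{A}$ is a uniformly random $M$-dimensional subspace independent of $z_{0|t}$. The trace term is about $M/(1-\gamma)$, larger than your $M$.

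Second, the vanishing cross term needs $z_{0|t}$ independent of $\mathbf{y}$, which holds only at the first projected step. Later clean estimates are computed from already-projected states and may have $\mathbf{A}z_{0|t}$ close to $\mathbf{y}$. So the claim that an $\Omega(\sqrt{M})$ perturbation is re-injected at every step is unsupported. You do not need it, since the final-state bound carries the argument. One caveat: $\mathbf{A}z_0=\mathbf{y}$ holds exactly only if the last DDIM step uses $\alpha=1$; otherwise it holds up to a correction of relative size $O(\sqrt{1-\alpha_0})$.
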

\begin{definition}[Trajectory-Intrinsic Observation]
Given a full-rank measurement matrix $\mathbf{A} \in \mathbb{R}^{m \times d}$, the trajectory-intrinsic observation is constructed as $\mathbf{y}=\mathbf{A}z_{0|T_{proj}}$, where $z_{0|T_{proj}}$ is a intermediate latent at step $T_{proj}$.
\end{definition}
We formalize the key properties this observation as follows:
\begin{lemma}[Johnson--Lindenstrauss Property]
\label{Lemma:Lemma2}
Let $\mathcal{X}=\{\mathbf{x}_1,\ldots,\mathbf{x}_N\}\subset\mathbb{R}^d$ be a finite set and let $\varepsilon\in(0,1)$.
There exists a random linear map $\mathbf{A}\in\mathbb{R}^{m\times d}$ with
$m = O(\varepsilon^{-2}\log N)$ such that, with probability at least $1-\delta$,
for all $i,j\in[N]$:
$
(1-\varepsilon)\|\mathbf{x}_i-\mathbf{x}_j\|_2
\;\le\;
\|\mathbf{A}(\mathbf{x}_i-\mathbf{x}_j)\|_2
\;\le\;
(1+\varepsilon)\|\mathbf{x}_i-\mathbf{x}_j\|_2.
$
Equivalently, $\mathbf{A}$ approximately preserves pairwise distances over $\mathcal{X}$.
\end{lemma}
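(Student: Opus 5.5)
We prove Lemma~\ref{Lemma:Lemma2} with the standard Johnson--Lindenstrauss argument: a concentration bound for a single fixed vector, followed by a union bound over the pairwise differences. Take $\mathbf{A}=\frac{1}{\sqrt{m}}\mathbf{G}$, where $\mathbf{G}\in\mathbb{R}^{m\times d}$ has i.i.d.\ $\mathcal{N}(0,1)$ entries. This is the Gaussian sensing matrix natural to the CS setting of the paper.

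\textbf{Step 1 (reduction).} The map is linear, so $\mathbf{A}(\mathbf{x}_i-\mathbf{x}_j)$ depends only on the difference $\mathbf{v}_{ij}=\mathbf{x}_i-\mathbf{x}_j$. There are at most $\binom{N}{2}<N^2/2$ such vectors, and the pairs with $\mathbf{v}_{ij}=0$ satisfy the claim trivially. By homogeneity it suffices to show, for a fixed unit vector $\mathbf{v}$,
\[
\Pr\bigl[\,\bigl|\|\mathbf{A}\mathbf{v}\|_2^2-1\bigr|>\varepsilon\,\bigr]\le 2e^{-c\,\varepsilon^2 m}
\]
for an absolute constant $c>0$. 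The squared-norm statement then gives the stated unsquared bounds. Indeed, if $1-\varepsilon\le\|\mathbf{A}\mathbf{v}\|_2^2\le 1+\varepsilon$, then $\sqrt{1-\varepsilon}\ge 1-\varepsilon$ and $\sqrt{1+\varepsilon}\le 1+\varepsilon$ for $\varepsilon\in(0,1)$.

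\textbf{Step 2 (concentration).} By rotational invariance of the Gaussian, each coordinate $(\mathbf{G}\mathbf{v})_k$ is $\mathcal{N}(0,1)$, and the coordinates are independent. Hence $m\|\mathbf{A}\mathbf{v}\|_2^2\sim\chi^2_m$. The tail bound comes from a Chernoff argument using the moment generating function $\mathbb{E}e^{\lambda g^2}=(1-2\lambda)^{-1/2}$ for $\lambda<1/2$.

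For the upper tail, Markov's inequality applied to $e^{\lambda\chi^2_m}$ gives $\Pr[\chi^2_m\ge(1+\varepsilon)m]\le\bigl((1+\varepsilon)e^{-\varepsilon}\bigr)^{m/2}$ after optimizing $\lambda$. The inequality $\log(1+\varepsilon)\le\varepsilon-\varepsilon^2/2+\varepsilon^3/3$ then bounds this by $e^{-m(\varepsilon^2/4-\varepsilon^3/6)}\le e^{-m\varepsilon^2/12}$. The lower tail is handled the same way with $\lambda<0$.

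This step is the only real calculation and the main obstacle: the Taylor estimates must be done carefully to obtain an explicit constant $c$ valid uniformly on $\varepsilon\in(0,1)$. Alternatively, one can cite the sub-exponential Bernstein inequality for $\chi^2$ variables.

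\textbf{Step 3 (union bound).} Apply Step 2 to each of the fewer than $N^2/2$ nonzero differences. The failure probability is then at most $N^2 e^{-c\varepsilon^2 m}$. Choosing $m\ge c^{-1}\varepsilon^{-2}\log(N^2/\delta)$ makes this at most $\delta$, so all pairwise distances are preserved simultaneously with probability at least $1-\delta$.

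For fixed $\delta$ (for example $\delta=1/N$ or any constant), this gives $m=O(\varepsilon^{-2}\log N)$, which is the claim. The same argument works for Rademacher or other sub-Gaussian $\mathbf{A}$, with the moment generating function in Step 2 bounded by a sub-Gaussian estimate instead. This covers the random sensing matrices used in CSGuard.
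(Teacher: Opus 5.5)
The paper never proves this lemma. It quotes the classical Johnson--Lindenstrauss lemma and only invokes it in the sketches of Proposition~4.7 and Theorem~4.8, so there is no argument of the paper's to compare against. Your proof is the standard Gaussian argument and is correct as written. The following steps all check out:
the optimized Chernoff bounds $((1+\varepsilon)e^{-\varepsilon})^{m/2}\le e^{-m\varepsilon^2/12}$ and $((1-\varepsilon)e^{\varepsilon})^{m/2}\le e^{-m\varepsilon^2/4}$, the latter from $\log(1-\varepsilon)\le-\varepsilon-\varepsilon^2/2$;
the conversion from squared to unsquared norms on $(0,1)$;
and the union bound giving the explicit sufficient condition $m\ge 12\,\varepsilon^{-2}\log(N^2/\delta)$.

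Writing the proof out gives precision that the paper's bare citation lacks. What the argument actually delivers is $m=O(\varepsilon^{-2}\log(N/\delta))$. This reduces to the stated $O(\varepsilon^{-2}\log N)$ only when $\delta\ge N^{-O(1)}$; you rightly flag this, since the statement never quantifies $\delta$. The argument also makes explicit that the guarantee holds for a point set fixed before $\mathbf{A}$ is drawn.

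That last point makes your closing sentence too strong. In CSGuard, all but the first of the latents $z_{0|t}$ that get projected are produced by earlier $\mathbf{A}$-dependent projections. The lemma therefore does not cover them without a uniform strengthening over a set fixed in advance (via nets or an RIP-type argument). Moreover, JL controls $\|\mathbf{A}v\|_2$, not how close $\mathbf{A}^\dagger\mathbf{A}v$ is to $v$, and the latter is what the paper's sketch of Proposition~4.7 appeals to.
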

\begin{proposition}[Trajectory-Intrinsic Observation Preserves Fidelity]
\label{proposition:proposition2}
    Let $\mathbf{y}$ be a trajectory-intrinsic observation and $\mathbf{A}$ satisfy the JL property, the projected latent remains close to the original denoising trajectory and preserves the fidelity.
\end{proposition}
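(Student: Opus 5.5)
Since the statement of Proposition~\ref{proposition:proposition2} is informal, the plan is to make ``remains close'' precise as an explicit bound on the perturbation introduced by the projection of Lemma~\ref{lemma:lemma1}. That bound will be controlled by the drift of the trajectory between step $T_{proj}$ and step $t$, and the JL property will keep it at the scale of that drift.

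\textbf{Step 1 (exact perturbation).} With $\mathbf{y}=\mathbf{A}z_{0|T_{proj}}$, substituting into Lemma~\ref{lemma:lemma1} gives
\[
z'_{0|t}-z_{0|t}=-\mathbf{A}^\dagger\mathbf{A}\,(z_{0|t}-z_{0|T_{proj}}) .
\]
Because $\mathbf{A}^\dagger\mathbf{A}$ is the orthogonal projector onto the row space of $\mathbf{A}$, this yields the trivial bound $\|z'_{0|t}-z_{0|t}\|_2\le\|z_{0|t}-z_{0|T_{proj}}\|_2$. By part (iii) of Lemma~\ref{lemma:lemma1} and Corollary~\ref{coro:coro1}, the null-space component of $z_{0|t}$ is left untouched. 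So the only change is confined to the range of $\mathbf{A}^\top$, and it has norm $\|\mathbf{A}^\dagger\mathbf{A}(z_{0|t}-z_{0|T_{proj}})\|_2$.

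\textbf{Step 2 (quantitative bound via JL).} I would apply Lemma~\ref{Lemma:Lemma2} to the finite set $\mathcal{X}=\{z_{0|s}\}_{s}$ of trajectory estimates. This gives $\|\mathbf{A}(z_{0|t}-z_{0|T_{proj}})\|_2\le(1+\varepsilon)\|z_{0|t}-z_{0|T_{proj}}\|_2$. I would then write $\|\mathbf{A}^\dagger\mathbf{A}v\|_2\le\sigma_{\min}(\mathbf{A})^{-1}\|\mathbf{A}v\|_2$, where $\sigma_{\min}(\mathbf{A})$ is the smallest singular value of $\mathbf{A}$. For a Gaussian $\mathbf{A}$ scaled to satisfy JL, i.e.\ with entries $\mathcal{N}(0,1/m)$ and $m\ll d$, the singular values concentrate near $\sqrt{d/m}$. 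The result is roughly
\[
\|z'_{0|t}-z_{0|t}\|_2\lesssim(1+\varepsilon)\sqrt{m/d}\,\|z_{0|t}-z_{0|T_{proj}}\|_2 .
\]
This is a factor about $\sqrt{\gamma}$ smaller than the trajectory drift, where $\gamma=m/d$ is the compression ratio.

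\textbf{Step 3 (smallness of the drift and comparison).} I would argue that $z_{0|t}$ for $t\le T_{proj}$ changes slowly once the semantic layout is fixed. The cleanest version is to assume $\|z_{0|t}-z_{0|T_{proj}}\|_2\le\eta$, with $\eta$ small for late $T_{proj}$; this can be justified as a Lipschitz assumption on $u_\theta$ along the trajectory. Under this assumption the perturbation is $O(\sqrt{\gamma}\,\eta)$.

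Contrasting with Proposition~\ref{proposition:proposition1} sharpens the conclusion. For random $\mathbf{y}$, the perturbation $\mathbf{A}^\dagger(\mathbf{A}z_{0|t}-\mathbf{y})$ has norm of order $\sqrt{m}/\sigma_{\min}$. That order is independent of the trajectory and therefore does not vanish as the drift does.

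\textbf{Main obstacle.} I expect the hard step to be Step 3, namely bounding the trajectory drift, since nothing stated earlier controls $z_{0|t}$ across steps. I would state it as an explicit assumption rather than try to derive it. A secondary subtlety is that JL as stated in Lemma~\ref{Lemma:Lemma2} only controls $\|\mathbf{A}v\|_2$, not the pseudoinverse. The passage from $\|\mathbf{A}v\|_2$ to $\|\mathbf{A}^\dagger\mathbf{A}v\|_2$ therefore needs a singular value bound, which I would either add as a standard concentration fact or bypass with the trivial projector bound from Step 1.
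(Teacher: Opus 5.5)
Your proposal follows the paper's route. You express the projection perturbation through the drift $z_{0|t}-z_{0|T_{proj}}$, invoke the JL property, and rest the conclusion on that drift being benign late in denoising: the paper argues this via the heuristic that coarse-to-fine denoising leaves it high-frequency dominated, while you state an explicit $\ell_2$ smallness assumption. Your exact identity $z'_{0|t}-z_{0|t}=-\mathbf{A}^\dagger\mathbf{A}(z_{0|t}-z_{0|T_{proj}})$, with the projector bound, is more precise than the paper's JL-based approximation $\Delta z\approx z_{0|t}-z_{0|T_{proj}}$.

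Two cautions about Step 2. First, the projector bound holds for every full-row-rank $\mathbf{A}$ with no JL input at all. Second, your $\sigma_{\min}$-based $\sqrt{\gamma}$ refinement needs $m\ll d$: at the paper's default $\gamma=0.8$ one has $\sigma_{\min}(\mathbf{A})\approx\sqrt{d/m}-1\approx 0.12$, so that bound is worse than the trivial one. It also needs a point set chosen independently of $\mathbf{A}$, which the CSGuard iterates are not once earlier projections have been applied.
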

% Based on the above theory, we design CSGuard to simultaneously satisfy both consistency and fidelity constraints:
Guided by this theory, CSGuard is designed to jointly satisfy consistency and fidelity constraints:
\begin{theorem}[CSGuard Simultaneously Satisfies Consistency and Fidelity Constraint]
\label{theorem:theorem1}
Let $\mathbf{A} \in \mathbb{R}^{M\times N}$ be a full-rank measurement matrix satisfying the JL property. Let $\mathbf{y}=\mathbf{A}z_{0|T_{proj}}$ be the trajectory-intrinsic observation, where $z_{0|T_{proj}}$ lies on the original denoising trajectory of the DMs at step $T_{proj}$.
For any $t \in [0,T_{proj}]$, CSGuard projects the latent $z_{0|t}$ as:
$z'_{0|t} = z_{0|t} - \mathbf{A}^\dagger(\mathbf{A} z_{0|t} - \mathbf{y})$. 
Then, the following key properties hold:
% \begin{enumerate}
%     \item Consistency Constraint: the projected latent $z'_{0|t}$ the minimal-perturbation solution to the CS constraint.
%     \item Fidelity Constraint: (i) the mapping $z_{0|t} \mapsto z'_{0|t}$ satisfies the minimal perturbation. (ii) The $z'_{0|t}$ remains close to the original generative trajectory and preserves the semantic coherence and watermark integrity.
% \end{enumerate}
 (1) Consistency Constraint: the projected latent $z'_{0|t}$ the minimal-perturbation solution to the CS constraint.
(2) Fidelity Constraint: (i) the mapping $z_{0|t} \mapsto z'_{0|t}$ satisfies the minimal perturbation. (ii) The $z'_{0|t}$ remains close to the original generative trajectory and preserves the semantic coherence and watermark integrity.
\end{theorem}

By combining a minimum-perturbation consistency projection with trajectory-intrinsic observation construction, CSGuard can satisfy the CS constraint without modifying the original generative trajectory, preserve both its image generation capability and watermark functionality while resisting forgery attacks.
% All proofs are detailed in ~\ref{app:proof}.
% or introducing additional perturbation like random observation (proposition~\ref{proposition:proposition1}), 
% perform iterative denoising to 
% preserve both its image generation capability and watermark functionality 
% (lemma~\ref{lemma:lemma1}, proposition~\ref{proposition:proposition2}), while resisting forgery attacks(theorem~\ref{theorem:theorem1}).
All proofs and the securit analysis are detailed in ~\ref{app:proof} and~\ref{app:security_analysis}, respectively.

% 这段落icml有中文解释

\subsection{Watermarked Image Generation}

% 补充开头，使用gen公式 $\mathcal{G}_{T \to 0}(z_T; u, c)$ 

% 需要说明的内容：用户拥有A和水印信息，y如何构建，投影比例和压缩比例
% 描述性说法+具体算法细节和流程

% 点名迭代优化null

The core of CSGuard to generate the watermarked image is enforcing that the intermediate latent $z_{0|t}$ satisfies the unique constraint of the legitimate user. 
For the legitimate user Alice, CSGuard assigns a unique private matrix $\mathbf{A} \in \mathbb{R}^{M\times N}$ and a watermark key $k$ for generating watermarked images, denoted as $\mathcal{G}_{T \to 0}(z_T; u_{\theta}, c, \mathbf{A},k)$.

Our generation procedure integrates a projection operator into the DDIM denoising loop. 
Starting from sampling an initial latent variable that carries the watermark information, we utilize the schema from Gaussian Shading \citep{DBLP:conf/cvpr/YangZCF0Y24} to sample $z_{T}$ with Alice's key $k$.
When $t > T_{proj}$, we follow the standard generation process to predict noise and denoise.
During the $T_{proj}$ step, we construct the observation with the secret matrix and the latent representation of the current estimated clean images $\mathbf{y} \leftarrow \mathbf{A}z_{0|T_{proj}}$.
This $\mathbf{y}$ defines the constraint subspace $\{z_{0|t}: \mathbf{A}z_{0|t} \equiv \mathbf{y}\}$.
When $t < T_{\text{proj}}$, we project $z_{0|t}$ onto the constraint subspace through Eq.~\eqref{eq:projection} to guarantee the consistency constraint and update $z_{0|t}$ and use it for the subsequent denoising step operates, thereby ensuring the generation process satisfy constraint that blind with matrix $\mathbf{A}$.
Finally, the decoding step  produces the watermarked image, which adheres to both constraint consistency and fidelity. 
This procedure ensures that the generated image satisfies CS constraint, while preserving both the DM and watermark’s native performance in the null space of $\mathbf{A}$.
The complete algorithm is given in Algorithm~\ref{alg:generation} in~\ref{app:alg}.

\subsection{Watermark Verfication}

The core of CSguard’s verification stage is to enforce the same constraint as in generation, using the secret matrix $\mathbf{A}$ to ensure that intermediate latents $z_{0|t}$ satisfy $\mathbf{A} z_{0|t} \equiv \mathbf{y}$, thereby recovering the initial latent variable carrying the watermark, denoted as $\mathcal{I}_{0 \to T}(z_0; u_{\theta},\mathbf{A}, k)$.

Unlike generation, where the observation $\mathbf{y}$ is derived from an intermediate latent at step $T_{\text{proj}}$, here $\mathbf{y}$ is constructed directly from the final latent $z_0$ of the input image:
$\mathbf{y} \leftarrow \mathbf{A} z_0, \text{where } z_0 = \mathcal{E}(x)$.
This design is motivated by two primary considerations: 
(1) For authentic watermarked images, $z_0$ inherently lies within the affine subspace defined by $\mathbf{A}$ during generation. Using it to reconstruct $\mathbf{y}$ ensures that the subsequent constrained inversion remains within this subspace.
(2) The latent $z_0$ shares semantic and structural coherence with the intermediate estimates $z_{0|t}$ during generation. As such, it serves as a natural reference point for stabilizing the recovery trajectory under the constraint.

The verification procedure begins by encoding $x$ into its latent representation $z_0 = \mathcal{E}(x)$. The core of the algorithm lies in the inverse DDIM sampling loop, which recovers the initial noise $z_T$ from $z_0$. Crucially, for steps $t < T_{\text{proj}}$, we project the estimated clean latent $z_{0|t}$ onto the constraint subspace $\{\mathbf{z}: \mathbf{A} \mathbf{z} \equiv  \mathbf{y}\}$, thereby ensuring that the recovered trajectory remains aligned with the watermark constraint throughout inversion.
The final step extracts the watermark message from the recovered $z_T$ using the Gaussian Shading\citep{DBLP:conf/cvpr/YangZCF0Y24}, and performs  verification.
This design ensures that only users who possess the secret matrix $\mathbf{A}$ can exploit the generation–inversion symmetry to recover the initial latent variable, effectively preventing unauthorized users from performing forgery attacks or falsely attributing malicious content.
The complete algorithm is given in Algorithm~\ref{alg:verification} in~\ref{app:alg}.

\section{Experiments}
We conduct extensive experiments to demonstrate that CSGuard simultaneously achieves high watermark effectiveness and generation quality, effectively resists forgery attacks, and maintains robustness under diverse image distortions. Furthermore, we perform ablation studies to validate the effectiveness of our trajectory-intrinsic observation design and analyze the impact of compression ratio, projection ratio, and sampling steps. Additional evaluation on different DMs and FPR, and additional discussion, as well as the details of settings, are provided in Appendix~\ref{app:experiment}.

\subsection{Effectiveness of Preserving Dual Fidelity}

\begin{table*}[t]
\centering
\caption{Comparison of DMsMark methods. 
We report bit accuracy (BitAcc), true positive rate (TPR), and CLIP score on benign images, and BitAcc, attack success rate (ASR) on forged images. The subscript reports the absolute drop from performance on benign images.}
\label{tab:wm_result}
\begin{small}
\begin{sc}
\scalebox{0.95}{
\begin{tabular}{l|ccc|cc}
\toprule
\multicolumn{1}{c}{} &\multicolumn{3}{|c|}{Benign Image}  &\multicolumn{2}{c}{Forged image}   \\ \midrule
{Method} & {BitAcc (\%)$\uparrow$} & {TPR (\%)$\uparrow$} & {CLIP Score$\uparrow$} & {BitAcc (\%)$\downarrow$} & {ASR (\%)$\downarrow$}\\
\midrule
Tree-Rings    & -          & 100.0          & 0.3194 & -          & 90.63 \\
RingID    & -          & 96.88          & 0.3047 & -          & 96.88 \\
WIND    & -          & 100.0          & 0.3087   & -          & 100.0 \\
GaussShading  & 100.0      & 100.0          & 0.3160 & $88.79_{11.21}$       & 100.0\\
PRCMark       & 100.0      & 100.0          & 0.3129 & $100.0_{0.00}$      & 100.0\\
GaussMarker   & 99.99       & 100.0          & 0.3078 & $96.43_{3.56}$       & 100.0 \\
{\cellcolor{gray!20}}CSGuard      &{\cellcolor{gray!20}}96.15 &{\cellcolor{gray!20}}100.0 &{\cellcolor{gray!20}}0.3135 &{\cellcolor{gray!20}}$70.26_{25.89}$ &{\cellcolor{gray!20}} 28.12\\
\bottomrule
\end{tabular}}
\end{sc}
\end{small}
\vskip -0.1in
\end{table*}

\begin{figure*}[tb]
  \begin{center}
    \centerline{\includegraphics[width=0.85\linewidth]{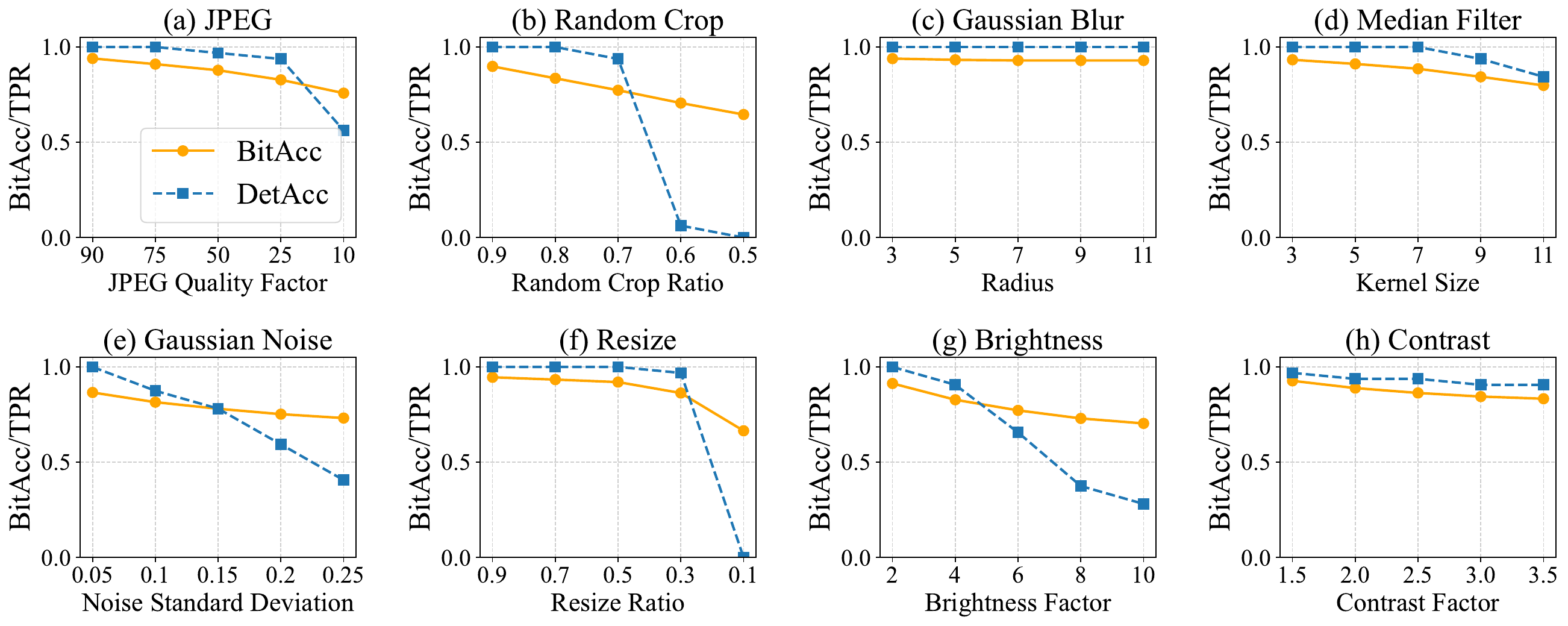}}
    \caption{Watermark performance under varying image distortions of different intensities.}
    \label{fig:attack}
  \end{center}
  \vskip -0.4in
\end{figure*}

We evaluate the watermark effectiveness of our method on benign images against six baselines, including metrics for watermark traceability and generative image quality.

Our method simultaneously achieves reliable watermark detection and high-quality image generation. Specifically, as shown in Table~\ref{tab:wm_result}, CSGuard attains a 100.0\% TPR on benign images, matching all baselines while maintaining a bit accuracy of 96.15\%, slightly below the ideal 100\% but sufficient for robust decoding. Crucially, it preserves generation quality with a CLIP score of 0.3135, comparable to state-of-the-art (SOTA) methods. 
These results demonstrate that CSGuard successfully enables reliable watermark detection without compromising image quality.
Beyond CLIP Score, we provide comprehensive evaluations in Appendix~\ref{app:fidelity}, including FID analysis, stylistic diversity preservation across five prompt categories, and extensive visualizations. These results confirm negligible distributional shift and artifact-free generation, further validating CSGuard's dual-fidelity design.
The generalizability of CSGuard across diverse DMs is detailed in ~\ref{app:abaltion}. 

This is enabled by our consistency-enforcing and dual-fidelity projection design. The minimal-perturbation consistency projection ensures CSGuard’s manipulation of intermediate latents remains minimally perturbative, while the trajectory-intrinsic observation keeps CSGuard’s trajectory closely aligned with the original trajectory, jointly preserving generation quality and watermark effectiveness. 

\subsection{Effectiveness against Forgery Attack}

We evaluate the effectiveness of CSGuard against forgery attack on forged images, demonstrate that CSGuard effectively resists such attacks even when the adversary has full knowledge of our method.

\textbf{Resistance to Standard Forgery.}
The proposed method exhibits unprecedented resistance to forgery, validating the effectiveness of binding the generation-inversion symmetry to the secret matrix through our CS constraints.
As shown in Table~\ref{tab:wm_result}, the bit accuracy of CSGuard on forged images drops to 70.26\%, substantially lower than its benign-image performance of 96.15\%. This clear separation enables reliable discrimination between forged and authentic images through a simple thresholding rule.
Concurrently, the attack success rate (ASR) falls to 28.12\%, a stark contrast to the ASR achieved by baselines, which fail to retain any detectable watermark under the same attack.
This demonstrates that CSGuard not only survives realistic forgery but also provides a verifiable signal of authenticity via measurable degradation in bit accuracy, enabling practical detection of forged content.

While our primary focus is on resisting reprompt-based forgery attacks, CSGuard inherently demonstrates strong resilience against intentional watermark removal and other forgery threats. More evaluations under these attacks and varying attack-target configurations are provided in ~\ref{app:add_attack}.

\textbf{Security under Full-Knowledge Threats.} 
We evaluate CSGuard under a strong threat model where the adversary has full knowledge of our watermarking mechanism and and attempts forgery even with partial information about the secret matrix $\mathbf{A}$.
We systematically evaluate forgery attacks using estimated matrices with similarity to the true secret matrix ranging from 0 (equivalent to brute-force) to 0.96. Experimental results demonstrate that across all similarity levels, forged images maintain bit accuracy between 69.85\% and 72.36\%, and ASR remains in the range of 25.0\%-37.5\%, nearly identical to the uninformed baseline. Notably, even at a high similarity of 0.96, the ASR reaches only 37.5\%, demonstrating that even with complete knowledge of our method and partial information about $\mathbf{A}$, the attacker still cannot reliably forge, further confirming CSGuard's security.

This resilience stems directly from our CS constraints, which tightly couple the generation--inversion symmetry to the exact secret matrix. Without precise knowledge of $\mathbf{A}$, the adversary cannot enforce the same constraint during inversion as during generation, rendering both brute-force and approximate-matrix forgery attacks fundamentally ineffective.

\subsection{Effectiveness against Post-Processing}

To comprehensively assess the robustness of CSGuard, we evaluate its performance under eight representative image distortion operations with varying intensity parameters, as summarized in Fig~\ref{fig:attack}.
% For each distortion type, we systematically vary the intensity parameter.

CSGuard exhibits strong robustness under a wide range of image distortions. Specifically, the bit accuracy and TPR remain consistently high, typically under JPEG compression, Gaussian blur, median filtering, resizing, and contrast adjustment. Under Gaussian blur, TPR remains at 100\% across all settings, while bit accuracy stays above 92\%. Under median filtering and contrast adjustment, bit accuracy exceeds 75\%, and TPR consistently exceeds 80\%.
For random cropping, Gaussian noise, and brightness adjustment, performance remains robust at mild-to-moderate intensities, with noticeable degradation only under extreme perturbations. 
This confirms that CSGuard preserves both watermark integrity and detectability under common post-processing operations.

\subsection{Ablation Studies and Analysis}
\begin{table*}[tb]
\centering
\caption{Watermarking performance under varying sampling step. 
We report BitAcc, TPR, CLIP score, and ASR of CSGuard, along with its time overhead ratio. The time ratio is computed as the execution time with projection divided by that without projection.
}
\label{tab:wm_step}
\begin{small}
\begin{sc}
\scalebox{0.875}{
\begin{tabular}{l|c|cccc|cc}
\toprule
\multicolumn{2}{c}{} &\multicolumn{4}{|c|}{Benign Image}  &\multicolumn{2}{c}{Forged image} \\ \midrule
Method &{Step} & {BitAcc (\%)$\uparrow$} & {TPR (\%)$\uparrow$} & {CLIP Score$\uparrow$} &{TimeRatio $\downarrow$}& {BitAcc (\%)$\downarrow$} & {ASR (\%)$\downarrow$}\\
\midrule
\multirow{4}{*}{CSGuard} &10   & 94.81          & 100.0          & 0.2896 &1.004$\times$ & $68.45_{26.36}$          & 15.62 \\
 &25 & 95.28      & 100.0      & 0.3104 &1.005$\times$ & $68.87_{26.41}$       & 15.63\\
 &50 & 96.15      & 100.0      & 0.3135 &1.005$\times$ &$70.26_{25.89}$       &28.12\\
 &100 & 95.23     & 100.0    & 0.3179 &1.007$\times$ & $68.21_{27.02}$       & 9.36 \\
\bottomrule
\end{tabular}}
\end{sc}
\end{small}
\vskip -0.3in
\end{table*}

% step和不同模型影响，不同阈值/FPR设置
% 2. 不同A选择柱状图，bit，det，+质量指标。尝试不同A，即公开算子在同proj情况下的情况，同时由于公开，说明这些方法是缺乏安全性的，2段（setting+结果）（来不及先不放了）

\begin{wrapfigure}{r}{0.5\textwidth}
  \vskip -0.3in
\centering
  \includegraphics[width=0.48\textwidth]{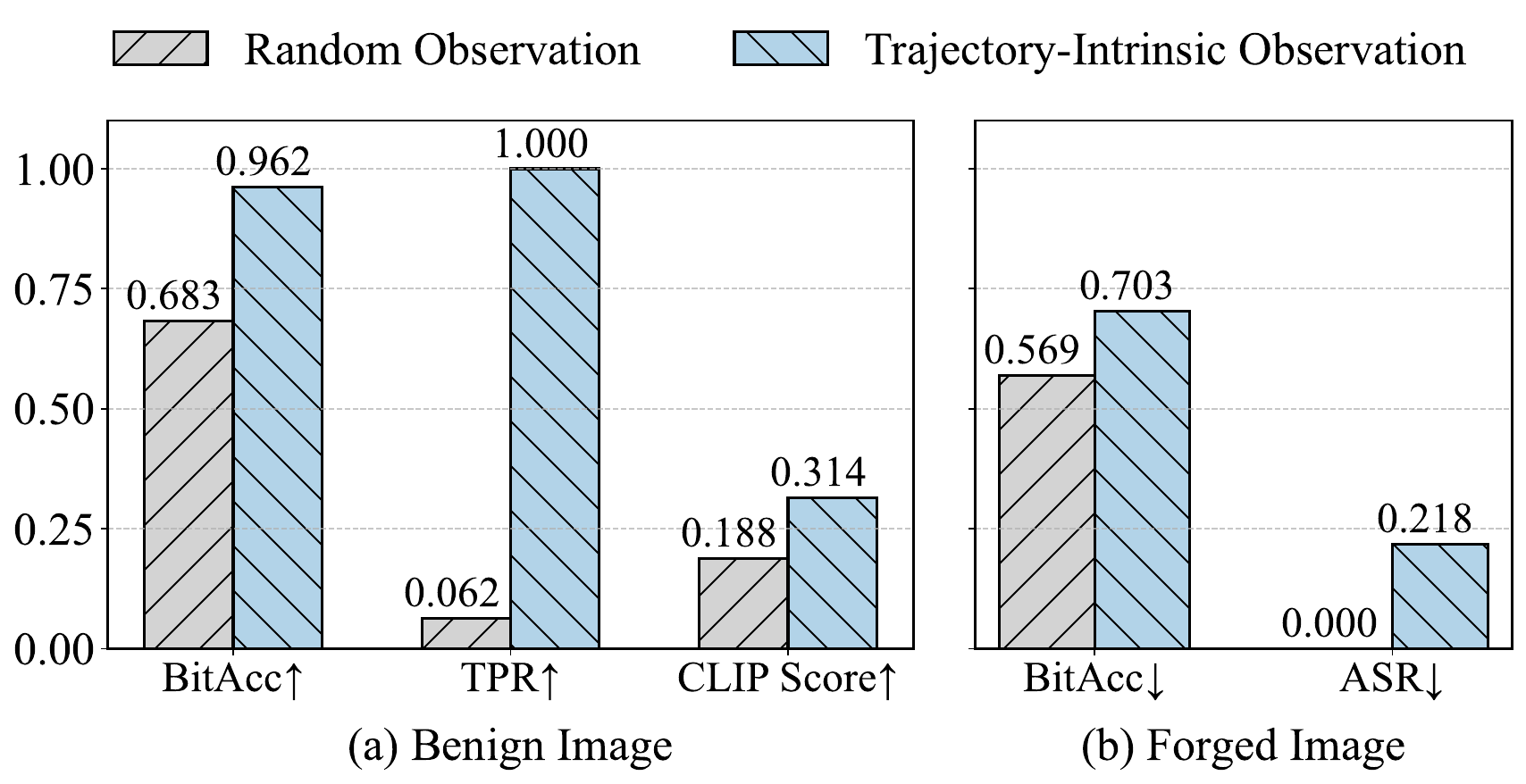}
  \caption{Ablation study on the impact of observation construction.
  }
  % \Description{}
  \label{fig:y_comparison}
  % \vskip -0.2in
\end{wrapfigure}

\textbf{Ablation Study on Observation.}
We conduct an ablation study to validate the efficacy of our trajectory-intrinsic observation construction, and underscore its essential role in both watermark effectiveness and generative quality.
As shown in Fig~\ref{fig:y_comparison}, the random observation leads to a catastrophic degradation across all metrics.
Specifically, although the ASR is near zero, this does not signify robustness; rather, it stems from a complete breakdown of watermark functionality, as evidenced by the TPR of only 0.062 on benign images. Concurrently, the CLIP score drops significantly from 0.314 to 0.188, indicating that the random observation disrupts the generative process itself, which aligns with Lemma~\ref{lemma:lemma1}.

\textbf{Analysis on Compressed Ratio.}
We analyze the impact of the compression ratio ($\text{cs\_ratio} \in \{0.1, 0.2, \dots, 0.9\}$ ), which governs the dimensionality of the matrix $\mathbf{A}$. As shown in Fig.~\ref{fig:proj_and_cs}(a), both bit accuracy on benign and forged images decline gradually with increasing $\text{cs\_ratio}$.
Concurrently, the ASR decreases monotonically, indicating stronger forgery resistance at higher dimensionality of $\mathbf{A}$. Crucially, the TPR on benign images remains near-perfect and the CLIP score stays stable, confirming that generative fidelity is preserved. 
It's worth noting that our default setting $\text{cs\_ratio}=0.8$ achieves a favorable trade-off, achieves robust forgery resistance while maintaining high bit accuracy on benign images and undiminished generation quality.

\begin{wrapfigure}{r}{0.55\textwidth}
\vskip -0.1in
\centering
  \includegraphics[width=0.54\textwidth]{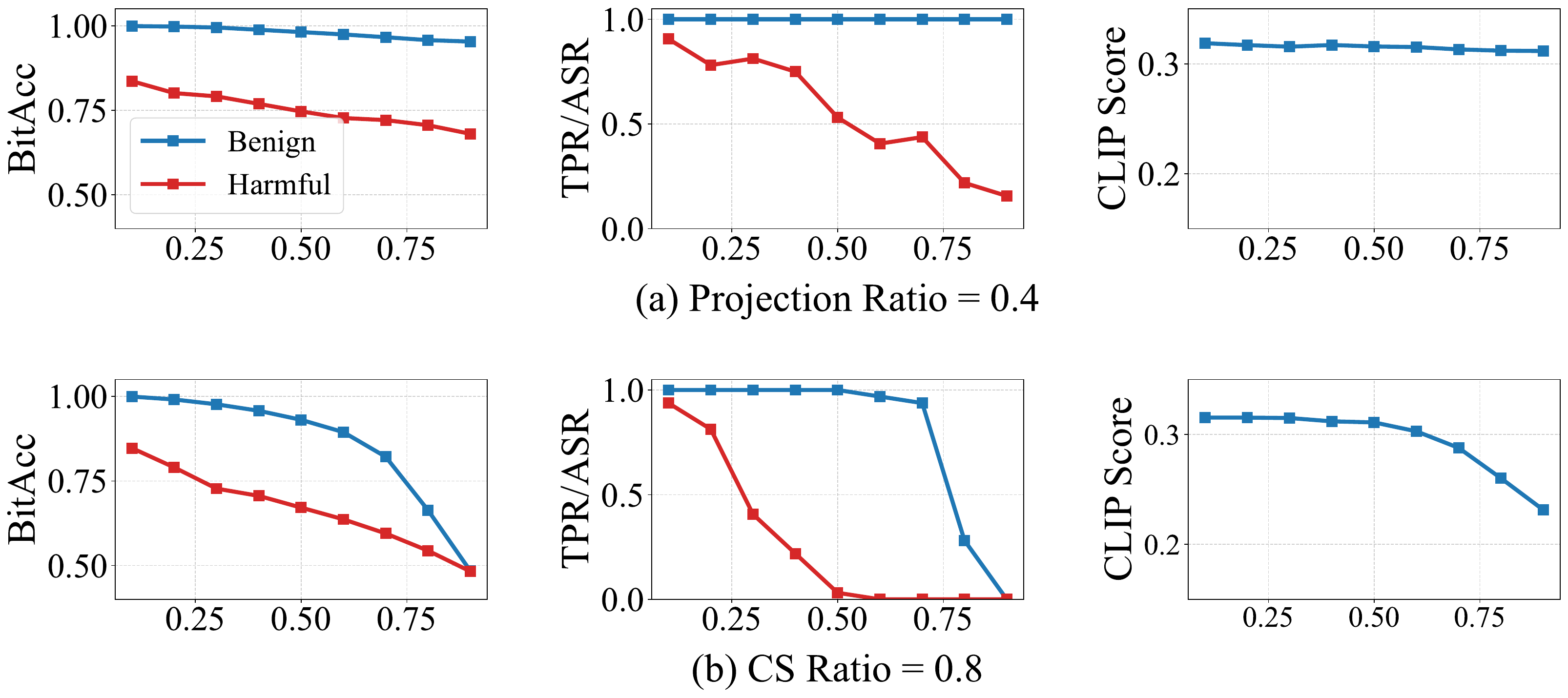}
  \caption{Watermark performance under varying CS recovery ratios and projection ratios. Top: Projection ratio fixed at 0.4, CS ratio varied. Bottom: CS ratio fixed at 0.8, projection ratio varied.
  }
  % \Description{}
  \label{fig:proj_and_cs}
  \vskip -0.1in
\end{wrapfigure}

\textbf{Analysis on Projection Ratio.}
We analyze the effect of the projection ratio  ($\text{proj\_ratio}$), which governs the $T_{proj}$, with results shown in Fig.~\ref{fig:proj_and_cs}(b). As the $\text{proj\_ratio}$ increases from 0.1 to 0.9, the ASR drops sharply from 0.94 to near zero, indicating stronger resistance to forgery. However, this gain comes at a cost: BitAcc on benign images declines from 0.99 to 0.5, the TPR falls from 1.0 to 0, and CLIP similarity decreases from 0.31 to 0.23. This demonstrates a fundamental trade-off between dual fidelity and resistance to forgery.
Our default setting $\text{proj\_ratio}=0.4$ strikes an optimal balance, maintains high bit accuracy, near-perfect TPR, and stable CLIP score, while suppressing ASR.

\textbf{Analysis on Sampling Step.}
We evaluate CSGuard under varying diffusion sampling steps, with results summarized in Table~\ref{tab:wm_step}. Across all settings, the method maintains consistently high watermark effectiveness on benign images: bit accuracy exceeds 94.81\%, TPR remains 100.0\%, and CLIP similarity stays stable around 0.31, confirming minimal impact on generation quality and watermark performance. Crucially, a clear separation persists between bit accuracy of benign and forged images and ASR is suppressed below 28.12\% in all cases, and drops to 9.36\% at step = 100. The slightly elevated ASR at step = 50 arises because the adversary uses the same step count during forgery, yet even under this informed setting, CSGuard retains strong resistance. This confirms CSGuard's robust performance across varying sampling steps while consistently thwarting forgery attacks.

\textbf{Analysis on computational overhead.} 
We evaluate the computational overhead introduced by CSGuard's CS projection step. Notably, the time overhead ratio remains near unity across varying sampling steps, ranging from 1.004$\times$ to 1.007$\times$ compared to the baseline diffusion process without CS Constraint. This confirms that the CS constraint introduces negligible computational overhead (less than 1\%), making it practical for real-world deployment scenarios.

\vskip -0.2in
\section{Conclusion and Limitation}
We present CSGuard, the first DMsMark scheme that resists forgery attacks and preserves both watermark and image quality. By binding the generation–inversion symmetry to a secret sensing matrix via a CS constraint, CSGuard restricts watermark embedding and verification to authorized users, eliminating the structural vulnerability exploited by forgery. Grounded in minimal-perturbation projection and trajectory-intrinsic observation, our dual-fidelity design ensures alignment with the original diffusion trajectory, preserving both watermarking efficacy and image quality. Experiments show CSGuard reduces forgery ASR from 100.0\% to 28.12\%, while maintaining 100\% TPR and competitive CLIP scores, establishing a new paradigm for secure, training-free DMsMark.

\textbf{Limitation.}
CSGuard is vulnerable to model-identical attacks (see~\ref{app:abaltion}). Although our distance-based detection mechanism mitigates this risk, it serves as a post-hoc remedy rather than a fundamental solution. Resolving this vulnerability remains a critical  future work. Additionally, while CSGuard achieves high fidelity on benign images, its robustness under aggressive post-processing and its bit accuracy still leave room for improvement. We will address these aspects in subsequent work.

\bibliography{ref}
\bibliographystyle{abbrvnat}

%%%%%%%%%%%%%%%%%%%%%%%%%%%%%%%%%%%%%%%%%%%%%%%%%%%%%%%%%%%%

\appendix

\section{Appendix}

This appendix provides supplementary materials that support the main claims of CSGuard. The content is organized as follows:

\begin{itemize}
    \item Appendix A.1 (Extended Related Work and Comparisons): We discuss model-based watermarking methods and provide a detailed comparison with the concurrent work PAI~\citep{DBLP:journals/corr/abs-2601-06639}, clarifying the distinct security objectives and mechanism designs that motivate our separate treatment in the main text.
    
    \item Appendix A.2 (Details of Proofs Sketch): We present complete proof sketches for all theoretical results stated in Section 4, including Lemma 4.2 (Minimal-Perturbation Consistency Projection), Proposition 4.4/4.7 (Observation Construction), and Theorem 4.8 (Dual-Constraint Satisfaction). The full algorithms for watermarked generation (Algorithm 1) and verification (Algorithm 2) are also provided.
    
    \item Appendix A.3 (Security Analysis of the Secret Matrix): We formally analyze the security of the secret projection matrix $\mathbf{A}$, including key generation guarantees, space complexity, and resistance against brute-force attacks.
    
    \item \textbf{Appendix A.4 (Details of Experiments)}: 
    \begin{itemize}
        \item A.4.1: Comprehensive experimental settings, including metrics, datasets, and baseline implementations.
        \item A.4.2: Extended fidelity evaluation with quantitative metrics and qualitative visualizations across five stylistic categories.
        \item A.4.3: Additional robustness evaluation against watermark removal attacks and learning-based forgery attack.
        \item A.4.4: Extended analysis including FPR sensitivity, cross-model generalization, attack-target configuration studies, and ablation on alternative projection matrices.
    \end{itemize}
    
    \item Appendix A.5 (Additional Discussion): We discuss (1) the impact of CS constraints on latent Gaussianity and empirical preservation of generative characteristics, (2) the plug-and-play potential of CSGuard and (3) the impact of our work.
\end{itemize}

\subsection{Extended Related Work and Comparisons}
\label{app:related_work}

\subsubsection{Model-based DMsMark}
In contrast to latent-based methods, model-based DMsMark embeds watermarks directly into the model’s parameters typically through fine-tuning or architectural modification, thereby binding the watermark to the model itself rather than its outputs. 
Early approaches such as ArtiFP \citep{DBLP:conf/iccv/YuSAF21} and ProMark \citep{DBLP:conf/cvpr/AsnaniCB0A24} embed watermarks into training data via steganographic encoding, requiring full retraining of the model to transfer the watermark from data to parameters. 
Later works focus on embedding watermarks into the model’s architecture. Stable Signature\citep{DBLP:conf/iccv/FernandezCJDF23}, WOUAF \citep{DBLP:conf/cvpr/Kim0PCY24}, and OmniMark \citep{DBLP:conf/aaai/FeiDXHZ25} modify the VAE decoder to couple watermark information with image generation, enabling user-specific watermarking with a single training pass. Similarly, FlexSecWmark \citep{DBLP:conf/mm/Xiong0F023} introduces a learnable control matrix that binds watermark presence to output quality, preventing unauthorized use. 
More recently, techniques targeting the UNet backbone, such as AquaLoRA \citep{DBLP:conf/icml/FengZH0WL00Y24} and SleeperMark \citep{DBLP:conf/cvpr/WangGZ0H0T25}, they leverage low-rank adaptation (LoRA) to embed watermarks into the core diffusion network. 

Critically, while model-based methods offer stronger binding between watermark and DMs, 
they trade deployment flexibility for security, require retraining or architectural changes, introducing additional computational overhead.
% \textbf{Further discussions and comparisons of related work appear in the appendix.}

\subsubsection{Comparison with PAI.}
Beyond the related works discussed in Section 2, PAI~\citep{DBLP:journals/corr/abs-2601-06639} is the most closely related work. It performs trajectory-level interventions in diffusion sampling to enable forensic watermarking and verification under attacks. However, CSGuard differs fundamentally from PAI in both security objectives and mechanism design.

Regarding security objectives, PAI focuses on post-hoc forensic analysis, aiming to detect whether a verified image has undergone adversarial manipulation. In contrast, CSGuard targets fundamental security by breaking the generation--inversion symmetry to prevent forgery attempts at the source. Rather than merely identifying manipulated images after the fact, CSGuard restricts watermark embedding and verification to authorized users, effectively eliminating the structural vulnerability exploited by forgery attacks.

In terms of mechanism design, PAI employs a key-conditioned deflection mechanism that modifies the predicted noise via a scaling function, introducing statistical bias into the denoising trajectory for hypothesis testing. In contrast, CSGuard leverages compressed sensing to enforce an affine constraint via projection. This binds the generation--inversion symmetry to a secret matrix, rendering unauthorized inversion mathematically ill-posed. This design ensures that only users possessing the secret matrix can correctly embed or verify the watermark, effectively preventing unauthorized users from performing forgery attacks.

Due to these distinct security objectives, we did not include a direct quantitative comparison in the main text. However, our method can be straightforwardly extended to the forensic scenario of PAI to detect image forgery. We implemented PAI's vanilla verification protocol on CSGuard-generated images. Experimental results demonstrate that CSGuard not only prevents forgery (reducing the Attack Success Rate to 28.12\%) but also maintains sufficient signal integrity to achieve 100\% accuracy (Table~\ref{tab:cross_model_attack}) in distinguishing forged from benign images under this evaluation protocol. This confirms that CSGuard provides both preventive security and reliable post-hoc detectability.

We alse note that compared to PAI, our method has certain limitations. PAI excels at semantic-level tamper localization (e.g., face swaps, inpainting), whereas CSGuard currently focuses on global ownership verification and forgery prevention. Additionally, while PAI demonstrates strong adaptability across diverse degradations, CSGuard's performance may experience slight trade-offs under aggressive post-processing or when adversaries use the same diffusion backbone.

\subsection{Details of Proofs Sketch}
\label{app:proof}

\begin{lemma}[Minimal-Perturbation Consistency Projection]
\label{lemma:lemma1}
Let $\mathbf{A} \in \mathbb{R}^{M\times N}$ with full row rank, and denote its Moore-Penrose pseudoinverse as $\mathbf{A}^\dagger = \mathbf{A}^\top (\mathbf{A} \mathbf{A}^\top)^{-1}$. For any intermediate latent $z_{0|t} \in \mathbb{R}^d$, the projected latent is:
\begin{equation}
     z'_{0|t} = z_{0|t} - \mathbf{A}^\dagger(\mathbf{A} z_{0|t} - \mathbf{y}),
    \label{eq:projection}
\end{equation} 
Then:
    (i) $z'_{0|t}$ satisfies the consistency constraint: $\mathbf{A} z'_{0|t} \equiv \mathbf{y}$.
    (ii) The mapping $z_{0|t} \mapsto z'_{0|t}$ satisfies the minimal perturbation:   
    $z'_{0|t}=\arg \min _{z\in\mathcal{M}_{\mathbf{A,y}}}||z'_{0|t}-z_{0|t}||_2$.
    (iii) $z'_{0|t}$ shares the same null component with $z_{0|t}$.
    % (ii) The mapping $z_{0|t} \mapsto z'_{0|t}$ is the unique orthogonal projection onto the affine subspace $\mathcal{M}_{\mathbf{A}, \mathbf{y}} := \{ z \in \mathbb{R}^d : \mathbf{A} z = \mathbf{y} \}$.
    % (iii) The projection decomposes $z'_{0|t}$ into orthogonal components.
\end{lemma}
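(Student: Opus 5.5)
The proof is elementary linear algebra, and I will work with the affine set $\mathcal{M}_{\mathbf{A},\mathbf{y}}=\{z:\mathbf{A}z=\mathbf{y}\}$, which is what the definition intends. I also identify the ambient dimension $d$ with $N$. The key identities come first. Full row rank makes $\mathbf{A}\mathbf{A}^\top$ invertible, so
\[
\mathbf{A}\mathbf{A}^\dagger=\mathbf{A}\mathbf{A}^\top(\mathbf{A}\mathbf{A}^\top)^{-1}=\mathbf{I}_M .
\]
I then set $P:=\mathbf{A}^\dagger\mathbf{A}$. From the identity above, $P^2=P$. From the explicit formula for $\mathbf{A}^\dagger$, $P^\top=P$. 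Hence $P$ is the orthogonal projector onto $\mathrm{Range}(\mathbf{A}^\top)$, and $\mathbf{I}-P$ is the orthogonal projector onto $\mathrm{Null}(\mathbf{A})$. This gives the orthogonal decomposition $\mathbb{R}^N=\mathrm{Range}(\mathbf{A}^\top)\oplus\mathrm{Null}(\mathbf{A})$.

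For (i), I apply $\mathbf{A}$ to Eq.~\eqref{eq:projection}:
\[
\mathbf{A}z'_{0|t}=\mathbf{A}z_{0|t}-\mathbf{A}\mathbf{A}^\dagger(\mathbf{A}z_{0|t}-\mathbf{y})=\mathbf{y}.
\]
So $z'_{0|t}\in\mathcal{M}_{\mathbf{A},\mathbf{y}}$.

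For (iii), I apply $\mathbf{I}-P$ to $z'_{0|t}$. The correction term $\mathbf{A}^\dagger(\cdot)$ lies in $\mathrm{Range}(\mathbf{A}^\top)$, and $(\mathbf{I}-P)\mathbf{A}^\dagger=\mathbf{A}^\dagger-\mathbf{A}^\dagger\mathbf{A}\mathbf{A}^\dagger=0$. Therefore $(\mathbf{I}-P)z'_{0|t}=(\mathbf{I}-P)z_{0|t}$, i.e.\ the null-space components coincide. Equivalently, $z'_{0|t}=\mathbf{A}^\dagger\mathbf{y}+(\mathbf{I}-\mathbf{A}^\dagger\mathbf{A})z_{0|t}$.

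For (ii), I read the objective as $\min_{z\in\mathcal{M}_{\mathbf{A},\mathbf{y}}}\|z-z_{0|t}\|_2$, since the $z'$ inside the norm is evidently a typo. Take any $z$ with $\mathbf{A}z=\mathbf{y}$ and write
\[
z-z_{0|t}=(z-z'_{0|t})+(z'_{0|t}-z_{0|t}).
\]
The first term satisfies $\mathbf{A}(z-z'_{0|t})=0$, so it lies in $\mathrm{Null}(\mathbf{A})$. The second equals $-\mathbf{A}^\dagger(\mathbf{A}z_{0|t}-\mathbf{y})$, which lies in $\mathrm{Range}(\mathbf{A}^\top)$. The two terms are orthogonal, so Pythagoras gives
\[
\|z-z_{0|t}\|_2^2=\|z-z'_{0|t}\|_2^2+\|z'_{0|t}-z_{0|t}\|_2^2\ge\|z'_{0|t}-z_{0|t}\|_2^2 .
\]
Equality holds iff $z=z'_{0|t}$, so $z'_{0|t}$ is the unique minimizer.

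The main obstacle is not computational. It is fixing the intended reading of the statement: the notational slips in the definition of $\mathcal{M}_{\mathbf{A},\mathbf{y}}$ and in the argmin, plus the mismatch between $\mathbb{R}^d$ and $N$. Once those are settled as above, every step reduces to the single identity $\mathbf{A}\mathbf{A}^\dagger=\mathbf{I}_M$. Corollary~\ref{coro:coro1} then follows immediately from (i) and (iii).
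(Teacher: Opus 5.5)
Your proof is correct and follows essentially the paper's route: you rewrite $z'_{0|t}=\mathbf{A}^\dagger\mathbf{y}+(\mathbf{I}-\mathbf{A}^\dagger\mathbf{A})z_{0|t}$, prove (i) by direct computation using $\mathbf{A}\mathbf{A}^\dagger=\mathbf{I}$, and get (ii) and (iii) from the orthogonality of the correction term in $\mathrm{Range}(\mathbf{A}^\top)$ to $\mathrm{Null}(\mathbf{A})$. Your Pythagorean argument spells out the step the paper compresses into ``hence it is the orthogonal projection onto $\mathcal{M}_{\mathbf{A},\mathbf{y}}$'' and also gives uniqueness, and your readings of $\mathcal{M}_{\mathbf{A},\mathbf{y}}$ as $\{z:\mathbf{A}z=\mathbf{y}\}$ and of the argmin objective are the intended ones.
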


\begin{proof}[Proof Sketch]
The projection operation can be equivalently rewritten as 
$z'_{0|t}:= \mathbf{A}^\dagger \mathbf{y} + (\mathbf{I} - \mathbf{A}^\dagger \mathbf{A})z_{0|t}$,
where item $\mathbf{A}^\dagger \mathbf{y}$ lies in the range space $\textit{Range}(\mathbf{A})$ of $\mathbf{A}$ the later lies in the null space $\textit{Null}(\mathbf{A})$  of $\mathbf{A}$, and $\textit{Range}(\mathbf{A}) \perp \textit{Null}(\mathbf{A})$.
(i) Direction computation:
$\mathbf{A} z'_{0|t} = \mathbf{A} \mathbf{A}^\dagger \mathbf{y} + \mathbf{A} (\mathbf{I} - \mathbf{A}^\dagger \mathbf{A}) z_{0|t} = \mathbf{y} + \mathbf{0} = \mathbf{y}$.
(ii) The operator $\mathcal{P}_{\mathbf{A}, \mathbf{y}}(z) = \mathbf{A}^\dagger \mathbf{y} + (\mathbf{I} - \mathbf{A}^\dagger \mathbf{A}) z$ is affine and idempotent, and its residual $z - \mathcal{P}_{\mathbf{A}, \mathbf{y}}(z)$ lies in $\textit{Range}(\mathbf{A}^\top)$, which is orthogonal to $\textit{Null}(\mathbf{A})$. Hence, it is the orthogonal projection onto $\mathcal{M}_{\mathbf{A}, \mathbf{y}}$, meaning $z'_{0|t}$ is the closest point with $z_{0|t}$ within $\mathcal{M}_{\mathbf{A}}$.
(iii) $z_{0|t}$  can be decomposed as $z_{0|t}= \mathbf{A}^\dagger\mathbf{A} z_{0|t}+(\mathbf{I} - \mathbf{A}^\dagger \mathbf{A})z_{0|t}$, thus sharing the same null component with $z'_{0|t}$.
% \vskip -0.3in
\end{proof}

% 只要用Moore-Penrose 伪逆这个投影就是正交的，前提是y 必须属于 \mathbf{A}A 的列空间，但因为我们y=Az，所以必然成立，（ii）说明我们的投影不是任意的修正，而是从任意点 z_{0|t}到约束集合 \mathcal{M}_{\mathbf{A}, \mathbf{y}}（即所有满足 \mathbf{A} z = \mathbf{y}Az=y 的点）的 最近点， 也就是正交投影（orthogonal projection），最近即最小扰动-》保障视觉质量；(i) 已经说明“投影后满足一致性约束”，但 (ii) 提供了更强的理论保证，

\begin{proposition}[Random Observation Destroys Generation Fidelity]
\label{proposition:proposition1}
    Let $\mathbf{y} \sim \mathcal{N}(0, I)$ be a randomly sampled observation. Then the projected latent significantly deviates from the original trajectory and destroys the generation fidelity. 
\end{proposition}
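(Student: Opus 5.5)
The plan is to quantify the perturbation the projection introduces when $\mathbf{y}$ is random, and to show that it is of the same order as the latent itself. By the decomposition used in the proof of Lemma~\ref{lemma:lemma1}, the projection acts only on the range component, since
\[
z'_{0|t}-z_{0|t} = -\mathbf{A}^\dagger(\mathbf{A}z_{0|t}-\mathbf{y}).
\]
Its norm is therefore
\[
\|z'_{0|t}-z_{0|t}\|_2=\|\mathbf{A}^\dagger(\mathbf{A}z_{0|t}-\mathbf{y})\|_2 .
\]
Writing $\mathbf{A}^\dagger=\mathbf{A}^\top(\mathbf{A}\mathbf{A}^\top)^{-1}$, we get
\[
\|\mathbf{A}^\dagger v\|_2^2=v^\top(\mathbf{A}\mathbf{A}^\top)^{-1}v\ge \|v\|_2^2/\sigma_{\max}(\mathbf{A})^2 .
\]
This lower bounds the displacement by $\|\mathbf{A}z_{0|t}-\mathbf{y}\|_2/\sigma_{\max}(\mathbf{A})$.

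The next step uses the independence of $\mathbf{y}\sim\mathcal N(0,I_M)$ from $z_{0|t}$ and $\mathbf{A}$. Conditioning on $z_{0|t}$ gives
\[
\mathbb E\|\mathbf{A}z_{0|t}-\mathbf{y}\|_2^2=\|\mathbf{A}z_{0|t}\|_2^2+M\ge M .
\]
Gaussian norm concentration (e.g.\ Laurent--Massart) then yields $\|\mathbf{A}z_{0|t}-\mathbf{y}\|_2\ge (1-\epsilon)\sqrt M$ with probability $1-e^{-c\epsilon^2M}$. For a random Gaussian $\mathbf{A}$ normalized as in Lemma~\ref{Lemma:Lemma2}, with entries of variance $1/M$, standard singular value bounds give $\sigma_{\max}(\mathbf{A})\le 1+\sqrt{N/M}+o(1)$. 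Combining these, the displacement is $\Omega(\sqrt M/(1+\gamma^{-1/2}))=\Omega(\sqrt{\gamma N})$ for the compression ratio $\gamma=M/N$. A cleaner route is to take $\mathbf{A}$ with orthonormal rows, so that $\mathbf{A}^\dagger=\mathbf{A}^\top$ is an isometry on $\mathbb R^M$. In that case the displacement equals $\|\mathbf{A}z_{0|t}-\mathbf{y}\|_2\approx\sqrt{\|\mathbf{A}z_{0|t}\|^2+M}$ exactly.

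The comparison with the trajectory is then immediate. A typical clean latent has $\|z_{0|t}\|_2=\Theta(\sqrt N)$, so for constant $\gamma$ (the paper uses $0.8$) the relative perturbation is bounded away from zero. Moreover, the range component of $z'_{0|t}$ is entirely replaced by $\mathbf{A}^\dagger\mathbf{y}$, which is independent of the trajectory. A fraction $\gamma$ of the latent's energy directions is thus overwritten with white noise that carries no information about $z_{0|t}$. This holds at every step $t<T_{proj}$, so the deviation does not decay: the constraint pins the range component to a fixed random vector for all remaining steps.

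The main obstacle is turning ``large deviation of $z_{0|t}$'' into ``destroyed fidelity'', since fidelity is not formally defined. I would handle this by fixing a concrete proxy: the distance of $z'_{0|t}$ from the original trajectory point, and the deviation of the final $z_0$ from its unprojected counterpart. I would then argue via Lipschitz continuity of the DDIM update in $z_{0|t}$ that this $\Omega(\sqrt{\gamma N})$ range-space error propagates to $z_0$ and cannot be corrected by later denoising, because each later step re-imposes $\mathbf{A}z=\mathbf{y}$. Whether this proxy counts as destroyed generation fidelity, as opposed to merely a measurable deviation, is left to the empirical ablation in Fig.~\ref{fig:y_comparison}. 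The proof itself would establish the quantitative deviation bound.
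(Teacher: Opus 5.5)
Your argument takes a genuinely different, more quantitative route than the paper. The paper's proof sketch is purely qualitative. It notes that $\mathbf{y}$ fixes the constraint set $\{z:\mathbf{A}z=\mathbf{y}\}$ and that the projected $z'_{0|t}$ drives the remaining denoising steps. It then asserts that a random $\mathbf{y}$ places this set in a noise space uncorrelated with the semantic manifold, which destroys the trajectory and the image quality.

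You instead prove the first half of the statement. From $\|\mathbf{A}^\dagger v\|_2^2=v^\top(\mathbf{A}\mathbf{A}^\top)^{-1}v$, the independence of $\mathbf{y}$, Gaussian concentration and a singular-value bound, you get a displacement of $\Omega(\sqrt{\gamma N})$. That is a constant fraction of $\|z_{0|t}\|_2$ at $\gamma=0.8$. You also identify the mechanism: the range component is overwritten by $\mathbf{A}^\dagger\mathbf{y}$ at every projected step. This gives an explicit, checkable bound and explains why the deviation does not decay; the paper provides neither.

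What it does not give, as you correctly flag, is the second half. Being far from the original trajectory point is not itself a loss of fidelity, since an unrelated but perfectly natural latent is also $\Theta(\sqrt N)$ away. The paper's heuristic at least targets the right object for that half, namely the constraint set versus the data manifold. On ``destroys fidelity'', both arguments end at the same informal level and rest on the ablation in Fig.~\ref{fig:y_comparison}.

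Two steps need repair. First, Lipschitz continuity of the DDIM update only upper-bounds how an error propagates, so it cannot show that the error ``cannot be corrected''. Second, conditioning on $z_{0|t}$ while keeping $\mathbf{y}\sim\mathcal N(0,I_M)$ is legitimate only at the first projected step. Afterwards $z_{0|t}$ depends on $\mathbf{y}$ through the earlier projections.

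Both are fixed by comparing with the unprojected trajectory $\bar z_{0|t}$, which is independent of $\mathbf{y}$. The difference $z'_{0|t}-\bar z_{0|t}=\mathbf{A}^\dagger(\mathbf{y}-\mathbf{A}\bar z_{0|t})+(\mathbf{I}-\mathbf{A}^\dagger\mathbf{A})(z_{0|t}-\bar z_{0|t})$ splits into orthogonal range and null parts. Hence $\|z'_{0|t}-\bar z_{0|t}\|_2\ge\|\mathbf{A}^\dagger(\mathbf{y}-\mathbf{A}\bar z_{0|t})\|_2$ at every $t<T_{proj}$, including $t=1$. Since $z_0=\sqrt{\alpha_0}\,z'_{0|1}+\sqrt{1-\alpha_0}\,\hat z_1$ with $\alpha_0\approx 1$, the bound carries over to the final latent, and your concentration argument then applies verbatim with $\bar z_{0|t}$ in place of $z_{0|t}$.

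You can also strengthen the bound. Expand $\mathbb E_{\mathbf{y}}\|\mathbf{A}^\dagger(\mathbf{y}-\mathbf{A}\bar z_{0|t})\|_2^2$ directly instead of discarding $\|\mathbf{A}\bar z_{0|t}\|_2^2$. This gives the scale-free bound $\mathbb E_{\mathbf{y}}\|z'_{0|t}-\bar z_{0|t}\|_2^2\ge\|\mathbf{A}^\dagger\mathbf{A}\bar z_{0|t}\|_2^2$. So the entire range component of the true trajectory is lost, however the latent scale compares with a unit-variance $\mathbf{y}$.
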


\begin{proof}[Proof Sketch]
The $\mathbf{y}$ defines the constraint space:$\{ z_{0|t} : \mathbf{A} z_{0|t} \equiv \mathbf{y} \}$. During the generation process, $z'_{0|t}$ is used for denoising and image generation; hence, $\mathbf{y}$ directly influences final visual fidelity.
In this setting, the projection operator projects the $z_{0|t}$ into a random or noise space that is uncorrelated with the semantic manifold, directly destroying the denoising trajectory and degrading the generative quality.
% \vskip -0.3in
\end{proof}

\begin{proposition}[Trajectory-Intrinsic Observation Preserves Fidelity]
\label{proposition:proposition2}
    Let $\mathbf{y}$ be a trajectory-intrinsic observation and $\mathbf{A}$ satisfy the JL property, the projected latent remains close to the original denoising trajectory and preserves the fidelity.
\end{proposition}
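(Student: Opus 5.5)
The plan is to turn the qualitative claim into a quantitative bound: the size of the correction applied by the projection is controlled by how far the current estimate $z_{0|t}$ has drifted from the reference latent $z_{0|T_{proj}}$ along the original trajectory.

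\textbf{Step 1: rewrite the perturbation.} Start from the projection in Lemma~\ref{lemma:lemma1} with $\mathbf{y}=\mathbf{A}z_{0|T_{proj}}$. The correction is
\[
z_{0|t}-z'_{0|t}=\mathbf{A}^\dagger\mathbf{A}\,(z_{0|t}-z_{0|T_{proj}}).
\]
So the perturbation is the orthogonal projection of the trajectory increment $\Delta_t:=z_{0|t}-z_{0|T_{proj}}$ onto $\mathrm{Range}(\mathbf{A}^\top)$. Since $\mathbf{A}^\dagger\mathbf{A}$ is an orthogonal projector, this immediately gives the dimension-free bound
\[
\|z'_{0|t}-z_{0|t}\|_2\le\|\Delta_t\|_2.
\]
This holds for any full-row-rank $\mathbf{A}$. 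By part (iii) of Lemma~\ref{lemma:lemma1}, the null-space component of $z_{0|t}$, which carries the remaining degrees of freedom used by the DM and the watermark, is left untouched.

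\textbf{Step 2: sharpen the bound with the JL property.} Apply Lemma~\ref{Lemma:Lemma2} to the finite set of trajectory estimates $\{z_{0|t}\}_{t\le T_{proj}}$, where $N$ is the number of sampling steps. This gives
\[
\|\mathbf{A}\Delta_t\|_2\le(1+\varepsilon)\|\Delta_t\|_2.
\]
Next write $\|\mathbf{A}^\dagger\mathbf{A}\Delta_t\|_2\le\sigma_{\min}(\mathbf{A})^{-1}\|\mathbf{A}\Delta_t\|_2$. For Gaussian-type JL matrices the singular values concentrate, so $\sigma_{\min}(\mathbf{A})\approx 1-\sqrt{M/N}$ after the standard $1/\sqrt{M}$ normalization. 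The resulting bound is
\[
\|z'_{0|t}-z_{0|t}\|_2\le C(\varepsilon,\gamma)\,\|z_{0|t}-z_{0|T_{proj}}\|_2 .
\]
I would also note the complementary reading of the same estimate. A random $\mathbf{y}$, as in Proposition~\ref{proposition:proposition1}, makes the residual $\mathbf{A}z_{0|t}-\mathbf{y}$ of order $\sqrt{M}$ regardless of the trajectory. The trajectory-intrinsic choice instead makes the residual scale with $\|\Delta_t\|$.

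\textbf{Step 3: show the increment is small.} The late-stage estimates $z_{0|t}$ for $t<T_{proj}$ change slowly, because the denoiser's prediction of the clean latent stabilizes. I would make this an explicit assumption: a Lipschitz bound on $u_\theta$, or directly $\|z_{0|t}-z_{0|T_{proj}}\|\le\eta(T_{proj})$ with $\eta$ decreasing as $T_{proj}$ shrinks. This matches the projection-ratio ablation.

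\textbf{Step 4: propagate through the DDIM updates.} Each DDIM step is affine in $z_{0|t}$ and $u_\theta$. A Gronwall/telescoping argument with the Lipschitz constant of $u_\theta$ should then bound the deviation of the projected trajectory from the unprojected one by a sum of the per-step perturbations.

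\textbf{Main obstacle.} I expect Step 3 and this propagation to be the hard part. The slow-drift property of $z_{0|t}$ is not derivable from anything stated earlier in the paper, and compounding across steps may inflate the constants. I would therefore state it as an explicit regularity hypothesis and keep the rigorous content to Steps 1 and 2. "Preserves fidelity" would then be interpreted as this perturbation bound.
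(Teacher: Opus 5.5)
Your outline has the same skeleton as the paper's proof sketch. Both rewrite the correction as $\mathbf{A}^\dagger\mathbf{A}(z_{0|t}-z_{0|T_{proj}})$ and then argue that the drift accumulated since step $T_{proj}$ is benign. The paper handles the first half by invoking the JL property to claim $\Delta z\approx z_{0|t}-z_{0|T_{proj}}$. It handles the second half with a coarse-to-fine heuristic: denoising settles low-frequency semantics early, so the drift is dominated by high-frequency content and $z'_{0|t}$ keeps the semantics of $z_{0|t}$.

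Your Step 1 replaces the JL approximation with the exact bound $\|z'_{0|t}-z_{0|t}\|_2\le\|z_{0|t}-z_{0|T_{proj}}\|_2$. JL does not really deliver that approximation anyway, since it controls $\|\mathbf{A}v\|_2$ rather than $\mathbf{A}^\dagger\mathbf{A}v$. Your bound uses only that $\mathbf{A}^\dagger\mathbf{A}$ is an orthogonal projector. It is an upper bound, which is the direction fidelity needs, and it shows the JL hypothesis is not needed for the closeness claim.

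Your Step 3 swaps the paper's spectral argument for an explicit norm hypothesis on the drift. The paper's version explains why semantics can survive even when the drift is not small in norm. Yours gives a genuinely quantitative statement, consistent with the projection-ratio ablation. Step 4 goes beyond the paper, which never propagates the per-step correction through later DDIM updates.

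Step 2, however, should be dropped, because it does not sharpen Step 1. For the Gaussian-type $\mathbf{A}$ you have in mind, the constant $(1+\varepsilon)/\sigma_{\min}(\mathbf{A})$ is far above $1$. Under the $1/\sqrt{M}$ scaling that makes JL hold with constants near one, $\sigma_{\min}(\mathbf{A})\approx\gamma^{-1/2}-1$. Your value $1-\sqrt{\gamma}$ belongs to the $1/\sqrt{N}$ scaling instead. At the paper's $\gamma=0.8$ this gives roughly $8.5(1+\varepsilon)$.

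There is a second problem with Step 2. The JL lemma holds for a point set fixed before $\mathbf{A}$ is drawn. For $t<T_{proj}$, the estimates $z_{0|t}$ come from the projected sampler and therefore depend on $\mathbf{A}$, so the lemma cannot be applied to them as you do. Neither issue affects your conclusion, since Step 1 already gives the bound deterministically with constant $1$.
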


\begin{proof}[Proof Sketch of Proposition~\ref{proposition:proposition2}]
Let $\Delta z = z'_{0|t}-z_{0|t}=\mathbf{A}^\dagger(\mathbf{A} z_{0|t} - \mathbf{y})$, since $\mathbf{A}$ satisfies the Johnson–Lindenstrauss  property, we have $\Delta z \approx z_{0|t}-z_{0|T_{proj}}$.
The DMs generates images by progressively denoising $\mathbf{z}_{0|t}$ from $t = T$ to $t = 0$ \citep{DBLP:conf/mm/FangCQM0C24},
thus $\mathbf{z}_{0|t}$ and $\mathbf{z}'_{0|t}$ nearly identical low-frequency semantics, and $\Delta z$ is dominated by high-frequency component. 
Therefore, the projected latent $\mathbf{z}'_{0|t}$ remains close to $\mathbf{z}_{0|t}$, preserving its semantic and structural coherence, thereby maintaining alignment with the original generative trajectory and ensuring both generative ability and watermark integrity.
\end{proof}

\begin{theorem}[CSGuard Simultaneously Satisfies Consistency and Fidelity Constraint]
\label{theorem:theorem1}
Let $\mathbf{A} \in \mathbb{R}^{M\times N}$ be a full-rank measurement matrix satisfying the JL property. Let $\mathbf{y}=\mathbf{A}z_{0|T_{proj}}$ be the trajectory-intrinsic observation, where $z_{0|T_{proj}}$ lies on the original denoising trajectory of the DMs at step $T_{proj}$.
For any $t \in [0,T_{proj}]$, CSGuard projects the latent $z_{0|t}$ as:
$z'_{0|t} = z_{0|t} - \mathbf{A}^\dagger(\mathbf{A} z_{0|t} - \mathbf{y})$. 
Then, the following key properties hold:
% \begin{enumerate}
%     \item Consistency Constraint: the projected latent $z'_{0|t}$ the minimal-perturbation solution to the CS constraint.
%     \item Fidelity Constraint: (i) the mapping $z_{0|t} \mapsto z'_{0|t}$ satisfies the minimal perturbation. (ii) The $z'_{0|t}$ remains close to the original generative trajectory and preserves the semantic coherence and watermark integrity.
% \end{enumerate}
\\ (1) Consistency Constraint: the projected latent $z'_{0|t}$ the minimal-perturbation solution to the CS constraint.\\
(2) Fidelity Constraint: (i) the mapping $z_{0|t} \mapsto z'_{0|t}$ satisfies the minimal perturbation. (ii) The $z'_{0|t}$ remains close to the original generative trajectory and preserves the semantic coherence and watermark integrity.
\end{theorem}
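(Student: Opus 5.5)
The theorem is essentially a packaging of Lemma~\ref{lemma:lemma1} and Proposition~\ref{proposition:proposition2}, so the plan is to reduce each of its claims to those two results. I would add a small quantitative bound that makes ``remains close to the trajectory'' precise. The reduction has three parts:
\begin{itemize}
\item Part (1) and part (2)(i) are purely linear-algebraic and follow from Lemma~\ref{lemma:lemma1}.
\item Part (2)(ii) is where the specific choice $\mathbf{y}=\mathbf{A}z_{0|T_{proj}}$ and the JL property of $\mathbf{A}$ (Lemma~\ref{Lemma:Lemma2}) enter.
\item The only non-rigorous ingredient is the empirical fact that the denoising estimates $z_{0|t}$ vary slowly in $t$.
\end{itemize}

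\textbf{Parts (1) and (2)(i).} First I would note that $\mathbf{y}=\mathbf{A}z_{0|T_{proj}}\in\textit{Range}(\mathbf{A})$, so $\mathcal{M}_{\mathbf{A},\mathbf{y}}$ is nonempty; indeed it contains $z_{0|T_{proj}}$. Full row rank makes $\mathbf{A}\mathbf{A}^\top$ invertible, so $\mathbf{A}^\dagger$ is well defined. Lemma~\ref{lemma:lemma1}(i) then gives $\mathbf{A}z'_{0|t}=\mathbf{y}$, which is the consistency part of (1). Lemma~\ref{lemma:lemma1}(ii) gives that $z'_{0|t}$ is the orthogonal projection of $z_{0|t}$ onto $\mathcal{M}_{\mathbf{A},\mathbf{y}}$, i.e.\ the minimal-perturbation solution. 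This yields both the minimality in (1) and (2)(i).

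\textbf{Perturbation identity.} For (2)(ii) I would write
\[
\Delta z := z'_{0|t}-z_{0|t} = -\mathbf{A}^\dagger\mathbf{A}\,(z_{0|t}-z_{0|T_{proj}}) = -P_{\textit{Range}(\mathbf{A}^\top)}(z_{0|t}-z_{0|T_{proj}}).
\]
From this I get the exact bound $\|\Delta z\|_2\le \|z_{0|t}-z_{0|T_{proj}}\|_2$, since orthogonal projections are non-expansive. I would also note $\|\Delta z\|_2 \le \sigma_{\min}(\mathbf{A})^{-1}\|\mathbf{A}(z_{0|t}-z_{0|T_{proj}})\|_2$. Applying Lemma~\ref{Lemma:Lemma2} to the finite set of trajectory points $\{z_{0|s}\}_{s\le T_{proj}}$, with $N=T_{proj}+1$ points, the right-hand side is within a factor $(1\pm\varepsilon)$ of the same quantity. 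So the deviation from the unprojected latent is controlled by how far the clean-image estimate has drifted since step $T_{proj}$. Lemma~\ref{lemma:lemma1}(iii) adds that the null-space component, of dimension $N-M$, is untouched, so only an $M$-dimensional slice of the drift is corrected.

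\textbf{Closeness to the trajectory.} The remaining step is to argue that $\|z_{0|t}-z_{0|T_{proj}}\|$ is small for $t\le T_{proj}$. I would invoke the known behaviour of DDIM cited in Proposition~\ref{proposition:proposition2}: by step $T_{proj}$ the coarse, low-frequency semantics of $z_{0|t}$ are already fixed, and later steps refine mainly high-frequency detail. Hence the drift, and so $\Delta z$, is small and dominated by high frequencies. Combined with the bound above, the projected $z'_{0|t}$ stays within that drift of the original trajectory, which preserves semantics.

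For watermark integrity I would argue in two steps. The perturbation is confined to the late-stage refinement. Because the projection is applied to $z_{0|t}$ and not to the initial noise, the Gaussian Shading pattern in $z_T$ is only affected through this bounded change. The main obstacle is exactly this step: ``$z_{0|t}$ varies slowly'' is a property of the trained network, not a consequence of the stated hypotheses. Making it rigorous would require assuming a Lipschitz or bounded-drift condition on the denoiser, for instance $\|z_{0|t}-z_{0|T_{proj}}\|\le \eta(T_{proj})$. I would state that assumption explicitly and deliver (2)(ii) as the quantitative bound $\|z'_{0|t}-z_{0|t}\|_2\le \eta(T_{proj})$.
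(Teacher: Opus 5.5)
Your proposal follows the paper's decomposition. Parts (1) and (2)(i) come directly from Lemma~\ref{lemma:lemma1}, and (2)(ii) is reduced to the drift $z_{0|t}-z_{0|T_{proj}}$ plus the coarse-to-fine DDIM heuristic of Proposition~\ref{proposition:proposition2}.

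The real difference is the key step for (2)(ii). The paper asserts, up to sign, $\Delta z\approx z_{0|t}-z_{0|T_{proj}}$ ``since $\mathbf{A}$ satisfies the JL property''. You instead write $\Delta z=-\mathbf{A}^\dagger\mathbf{A}(z_{0|t}-z_{0|T_{proj}})$ and use that $\mathbf{A}^\dagger\mathbf{A}$ is an orthogonal projector. This gives the exact inequality $\|\Delta z\|_2\le\|z_{0|t}-z_{0|T_{proj}}\|_2$ with no JL hypothesis.

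Your version is the more accurate one. JL controls $\|\mathbf{A}x\|_2$ and gives nothing like $\mathbf{A}^\dagger\mathbf{A}x\approx x$. Since $\mathbf{A}^\dagger\mathbf{A}$ projects onto an $M$-dimensional subspace, $\Delta z$ captures only the row-space part of the drift, which is your Lemma~\ref{lemma:lemma1}(iii) remark. It also shows the theorem's JL assumption is unnecessary for the per-step bound. And it makes the one unproved ingredient, bounded drift of $z_{0|t}$, an explicit hypothesis rather than leaving it implicit in Proposition~\ref{proposition:proposition2}.

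You should drop your secondary bound through $\sigma_{\min}(\mathbf{A})^{-1}$, for two reasons:
\begin{itemize}
\item After substituting JL it is weaker than your exact bound in the paper's regime $M/N=0.8$, where a JL-normalized Gaussian $\mathbf{A}$ has $\sigma_{\min}\approx\sqrt{N/M}-1\approx 0.12$.
\item Lemma~\ref{Lemma:Lemma2} does not directly apply to the points $z_{0|s}$ for $s<T_{proj}$. These are computed from states produced by earlier projections, so they depend on $\mathbf{A}$, whereas the JL guarantee is for a point set fixed before $\mathbf{A}$ is drawn.
\end{itemize}

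Finally, both your argument and the paper's control only the per-step correction $\|z'_{0|t}-z_{0|t}\|_2$ along CSGuard's own, already projected, trajectory. Closeness to the unprojected DDIM run would also require propagating these corrections through the sampler, for instance via a Lipschitz bound on $u_\theta$. Neither you nor the paper supplies this.
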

\begin{proof}[Proof Sketch]
(1) \textit{Consistency Constraint.}  
By construction,
$
\mathbf{z}_{0|t}' = \mathbf{z}_{0|t} - \mathbf{A}^\dagger (\mathbf{A} \mathbf{z}_{0|t} - \mathbf{y}).$
Multiplying both sides by $\mathbf{A}$ and using $\mathbf{A} \mathbf{A}^\dagger \mathbf{A} = \mathbf{A}$ (since $\mathbf{A}$ is full-rank), we obtain
$
\mathbf{A} \mathbf{z}_{0|t}' = \mathbf{A} \mathbf{z}_{0|t} - \mathbf{A} \mathbf{A}^\dagger (\mathbf{A} \mathbf{z}_{0|t} - \mathbf{y}) = \mathbf{y}.
$
(2) \textit{Fidelity Constraint.}  
(i) By Lemma~\ref{lemma:lemma1}, this projection is the minimum-norm solution to $\mathbf{A} \mathbf{z} \equiv \mathbf{y}$, thus minimizing $\|\mathbf{z} - \mathbf{z}_{0|t}\|_2$.
(ii) By Lemma~\ref{Lemma:Lemma2}, $\mathbf{A}$ approximately preserves this distance, and by Proposition~\ref{proposition:proposition2}, this ensures that $\mathbf{z}_{0|t}'$ remains close to the original generative trajectory and preserves the semantic coherence and watermark integrity.
\end{proof}

\begin{figure*}[tb]
  \begin{center}
    \centerline{\includegraphics[width=0.99\linewidth]{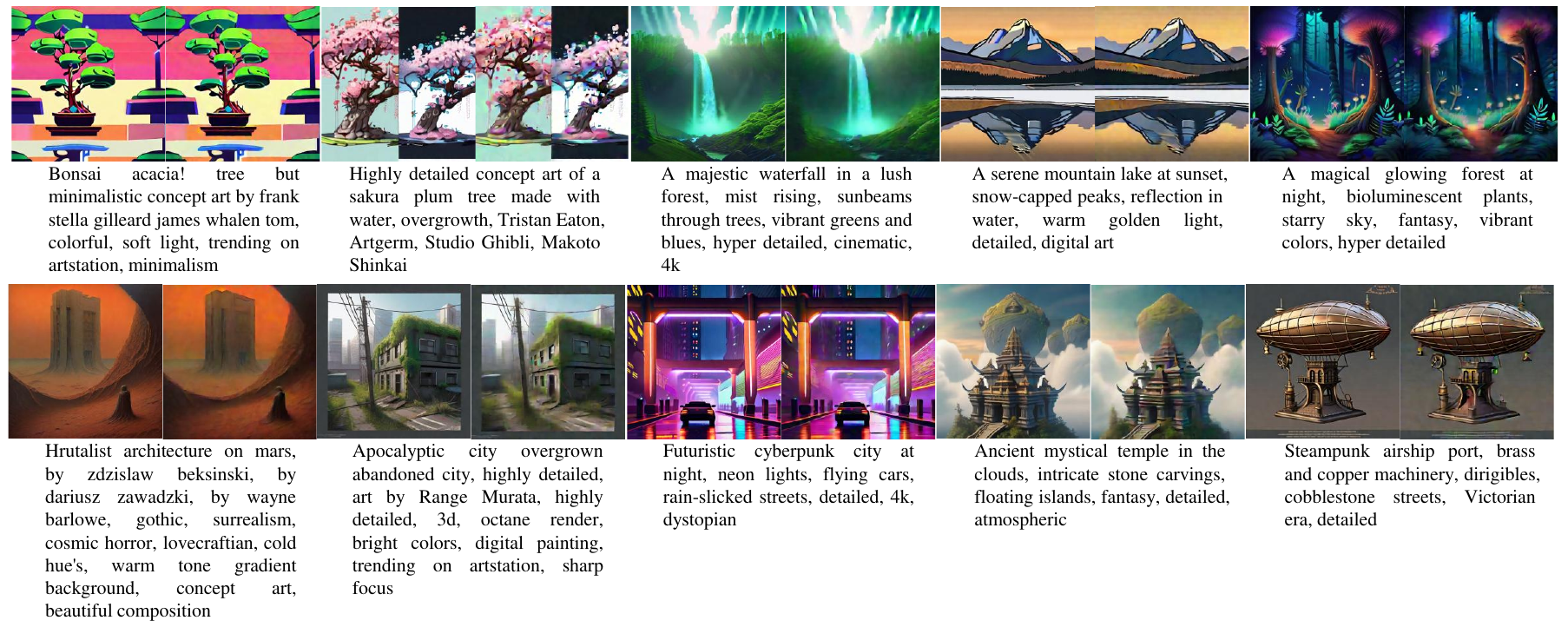}}
    \caption{
      Comparative visualization of watermarking results. For each prompt (bottom), the left image shows the watermarked output generated by the baseline method GaussShading (without CS constraints), while the right image presents the result produced by CSGuard. 
    }
    \label{fig:example}
  \end{center}
  \vskip -0.4in
\end{figure*}

\begin{figure}[tb]
  \begin{center}
    \centerline{\includegraphics[width=0.75\columnwidth]{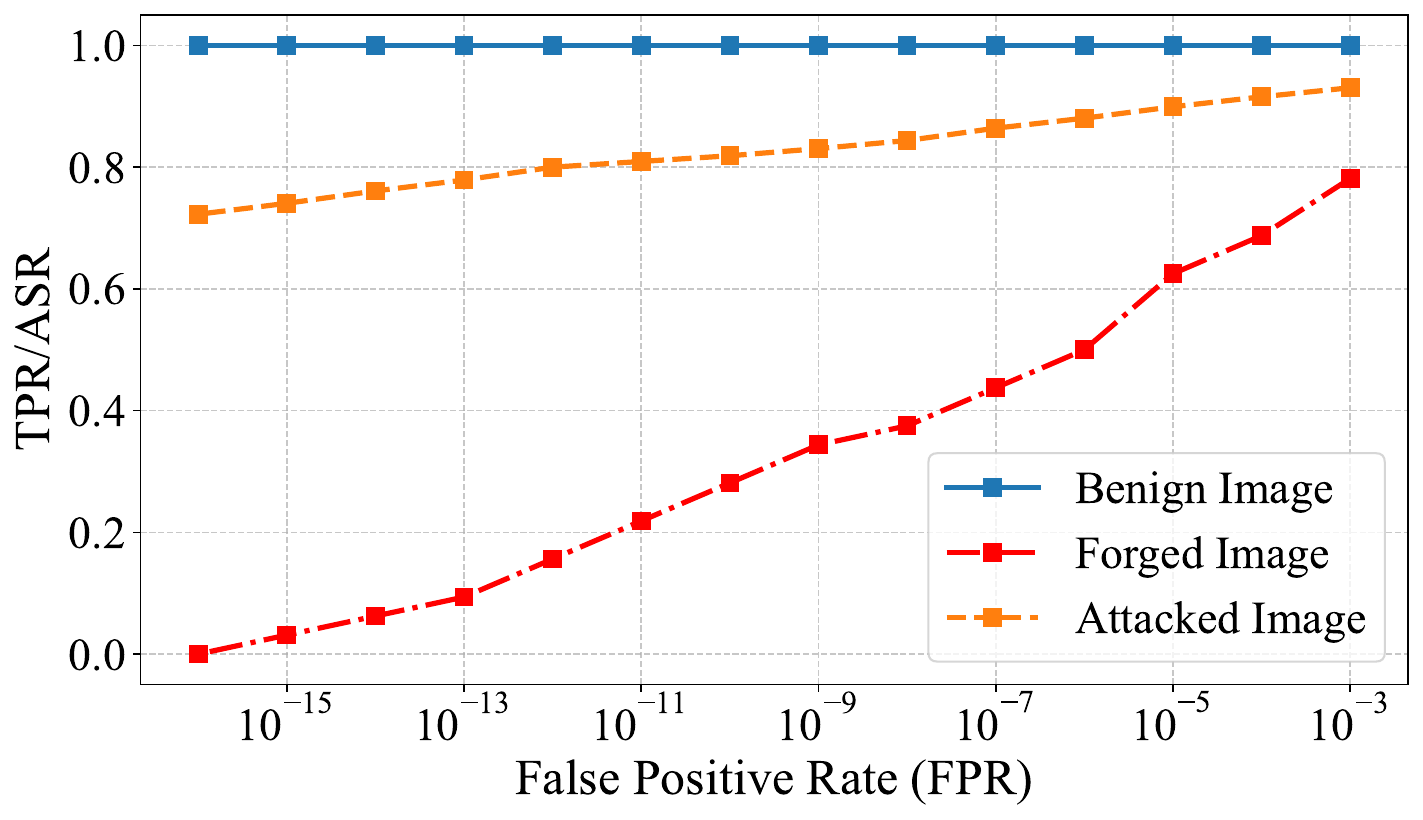}}
    \caption{TPR/ASR for benign and forged images under varying false positive rate (FPR) thresholds.}
    \label{fig:fpr}
  \end{center}
  \vskip -0.1in
\end{figure}

% \subsection{Details of Watermark Image Generation and Verfication Algorithms}

\label{app:alg}
\begin{algorithm}[tb]
\caption{Watermarked Image Generation}
\label{alg:generation}
\begin{algorithmic}[1]
\REQUIRE Secret matrix $\mathbf{A}\in \mathbb{R}^{M\times N}$, watermark $k$, prompt $c$, target step $T_{proj}$
\ENSURE Watermarked image $x^w$
\STATE Initialize watermarked noise: $z_T \leftarrow Sample(k)$
% \STATE Embed watermark: $z_T^w \leftarrow Embed(z_T,k)$
\FOR{$t = T$ \textbf{to} $1$}
    \STATE Predict noise: $\hat{z}_t \gets u_{\theta}(z_t, t, c)$
    \STATE Estimate clean image: $z_{0|t} \leftarrow \frac{(z_t-\sqrt{1-\alpha_{t}}\hat{z}_t)}{\sqrt{\alpha_t}}$
    \IF {$t = T_{\text{proj}}$}
        \STATE Construct observation: $\mathbf{y} \leftarrow \mathbf{A}z_{0|T_{proj}}$
    \ENDIF
    \IF{$t < T_{\text{proj}}$} 
        \STATE Project: $z'_{0|t} \leftarrow z_{0|t} - \mathbf{A}^\dagger(\mathbf{A} z_{0|t} - \mathbf{y})$
        \STATE $z_{0|t} \leftarrow z'_{0|t}$
    \ENDIF
    \STATE Denoise: $z_{t-1} \gets \sqrt{\alpha_{t-1}}\, z_{0|t} + \sqrt{1 - \alpha_{t-1}}\, \hat{z}_t$
\ENDFOR
\STATE Decode: $x^w \leftarrow \mathcal{D}(z_0)$
\STATE \textbf{return} $x^w$
\end{algorithmic}
\end{algorithm}

\begin{algorithm}[tb]
\caption{Watermark Verification}
\label{alg:verification}
\begin{algorithmic}[1]
\REQUIRE Image $x$, secret matrix $\mathbf{A} \in \mathbb{R}^{M\times N}$, watermark key $k$, target step $T_{\text{proj}}$
\ENSURE Boolean: whether $x$ is watermarked by Alice
\STATE Encode: $z_0 \leftarrow \mathcal{E}(x)$
% \STATE Initial $z_{0|0} \leftarrow \sqrt{\alpha_{t+1}}\, z_{0|t} + \sqrt{1 - \alpha_{t+1}}\, u(z_t, t)$
\STATE Reconstruct observation: $\mathbf{y} \leftarrow \mathbf{A} z_0$
\FOR{$t = 0$ \textbf{to} $T-1$}
    \STATE Predict noise: $\hat{z}_t \gets u_{\theta}(z_t, t)$
    \STATE Estimate clean image: $z_{0|t} \leftarrow \frac{(z_t-\sqrt{1-\alpha_{t}}\hat{z}_t)}{\sqrt{\alpha_t}}$
    \IF{$t < T_{\text{proj}}$}
        \STATE Project: $z_{0|t}' \leftarrow z_{0|t} - \mathbf{A}^\dagger (\mathbf{A} z_{0|t} - \mathbf{y})$
        \STATE $z_{0|t} \leftarrow z_{0|t}'$
    \ENDIF
    \STATE Invert: $z_{t+1} \leftarrow \sqrt{\alpha_{t+1}}\, z_{0|t} + \sqrt{1 - \alpha_{t+1}}\, \hat{z}_t$
\ENDFOR
\STATE Verify: $\text{result} \leftarrow Verify(z_T,k)$
\STATE \textbf{return} $\text{result}$
\end{algorithmic}
\end{algorithm}

\subsection{Security Analysis of the Secret Matrix}
\label{app:security_analysis}
The security of CSGuard fundamentally relies on the confidentiality of the secret projection matrix $\mathbf{A} \in \mathbb{R}^{M \times N}$, which serves as the core key for both embedding and verification. We analyze its security guarantees from two complementary perspectives: key generation properties, computational complexity against brute-force attacks.

\textbf{Key Generation and Randomness Guarantees.} The matrix is instantiated by sampling entries from a continuous random distribution (e.g., uniform or Gaussian) and is strictly maintained as a secret. This sampling strategy ensures two critical properties: (1) it provides unpredictability, preventing adversaries from modeling or reconstructing $\mathbf{A}$ through statistical inference; and (2) it satisfies the Restricted Isometry Property (RIP) required for compressed sensing, guaranteeing stable latent recovery during legitimate verification without compromising algorithmic stability.

\textbf{Key Space and Brute-Force Resistance.} The effective key space is governed by the continuous domain $\mathbb{R}^{M \times N}$. For a representative configuration (e.g., SDXL with latent dimension $N = 4 \times 64 \times 64 = 16,\!384$ and a compression ratio of 0.8, yielding $M \approx 13,\!107$), the matrix comprises over $2.15 \times 10^8$ independent real-valued entries. Even under a conservative binary discretization assumption, the exhaustive search space scales to $2^{2.15 \times 10^8}$, rendering brute-force key recovery computationally infeasible.
% Furthermore, the high dimensionality and continuous nature of the parameter space inherently resist dictionary-based or precomputation attacks, as partial key guesses yield statistically indistinguishable random projections.

Collectively, the exponential key space, continuous high-dimensional sampling establish a robust security foundation. These properties ensure that unauthorized watermark extraction, key recovery, or forgery attempts remain both computationally prohibitive and statistically bounded.

\subsection{Details of Experiments}
\label{app:experiment}

\subsubsection{Experimental Setting}

\paragraph{Metrics.}
In order to evaluate the performance of watermark traceability and the resistance to forgery attacks, we report the bit accuracy (BitAcc) both on the benign and forged images, and true positive rate (TPR) corresponding to the fixed false positive rate (FPR) on benign images and attack success rate (ASR) on forged images. 
In our main experiments, both the BitAcc and ASR of CSGuard are evaluated under a fixed FPR of $10^{-10}$. For baseline methods, we retain their original default settings.
% Tree-Rings uses a p-value threshold of 0.05; GaussShading sets FPR to $10^{-6}$, PRC uses $10^{-5}$, and GaussMarker adopts $10^{-2}$. 
Since RingID and WIND rely on distance-based metrics to determine watermark presence, we adhere to their original settings and report their detection rates on both benign and forged images.

For generative fidelity evaluation, we report CLIP scores\footnote{https://huggingface.co/openai/clip-vit-base-patch32}~\citep{DBLP:conf/icml/RadfordKHRGASAM21} to quantify the semantic alignment between generated images and their input prompts.
For generative fidelity evaluation, we report CLIP scores\footnote{https://huggingface.co/openai/clip-vit-base-patch32}~\cite{DBLP:conf/icml/RadfordKHRGASAM21} to quantify the semantic alignment between generated images and their input prompts. To provide a comprehensive assessment of generation quality in ~\ref{app:fidelity}, we further utilize CLIP Image Score and LPIPS to measure the perceptual and structural similarity between our watermarked outputs and the baseline images (generated without CS constraints). Additionally, we report the Fréchet Inception Distance (FID) to evaluate the distributional alignment between generated samples and real-world datasets.

\paragraph{Dataset.}
In our primary experiments, we adhere to the established forgery attack protocol~\citep{DBLP:conf/cvpr/0025LTFQ25}, employing prompts from the Stable-Diffusion-Prompt dataset\footnote{huggingface.co/datasets/Gustavosta/Stable-Diffusion-Prompts} for benign image generation and the Inappropriate Image Prompts dataset\footnote{huggingface.co/datasets/AIML-TUDA/i2p} for forged image synthesis.
To comprehensively evaluate generation quality in Appendix~\ref{app:fidelity}, we further utilize captions from the COCO dataset\footnote{https://cocodataset.org} for image generation. Additionally, real images from COCO serve as the reference distribution for computing both FID and CLIP Image Score, enabling a rigorous assessment of distributional alignment and semantic fidelity with the natural image manifold.

% \paragraph{Diffusion Models.}
% Following the setting within previous works \citep{DBLP:conf/cvpr/0025LTFQ25}, we use Stable Diffusion 2.1 (SD2.1)~\citep{DBLP:conf/cvpr/RombachBLEO22} as the attack model and 
% several common models as target models, namely 
% % Stable Diffusion 1.5 (SD1.5)~\citep{DBLP:conf/cvpr/RombachBLEO22},  
% Stable Diffusion XL(SDXL)~\citep{DBLP:conf/iclr/PodellELBDMPR24}, and PixArt-$\sum$~\citep{DBLP:conf/iclr/ChenYGYXWK0LL24}. All experiments use 512 $\times$ 512 images.
% % 说明PRC用的不同
\paragraph{Baseline and Diffusion Model.}
We conduct comparative evaluations against 6 latent-based watermarking schemes.
% , including 2 model-based schemes and 4 latent-based schemes. For model-based methods, we compare with Stable Signature\citep{DBLP:conf/iccv/FernandezCJDF23} and AquaLoRA \citep{DBLP:conf/icml/FengZH0WL00Y24}. For latent-based methods, 
Specifically, we compare with Tree-Rings \citep{DBLP:conf/nips/WenKGG23},RingID \citep{DBLP:conf/eccv/CiYSS24}, WIND \citep{DBLP:conf/iclr/ArabiFWHC25},  Gaussian Shading \citep{DBLP:conf/cvpr/YangZCF0Y24}, PRC-Watermark \citep{DBLP:conf/iclr/GunnZS25} and GaussMarker \citep{DBLP:conf/icml/LiHHH25}. 
Following the setting within previous works \citep{DBLP:conf/cvpr/0025LTFQ25}, We use Stable Diffusion 2.1 (SD2.1)~\citep{DBLP:conf/cvpr/RombachBLEO22} as the attack model and Stable Diffusion XL (SDXL)~\citep{DBLP:conf/iclr/PodellELBDMPR24} as the target model.
% for all methods except PRCMark, which uses SD1.5, 
Beyond the default setting described above, we further evaluate the generalizability of our method across diverse diffusion models and the impact of varying attack-target configurations in ~\ref{app:abaltion}. Specifically, we incorporate Stable Diffusion 1.5 (SD1.5)~\citep{DBLP:conf/cvpr/RombachBLEO22} and PixArt-$\Sigma$~\citep{DBLP:conf/iclr/ChenYGYXWK0LL24} as additional models to conduct a comprehensive analysis across different model families and architectural setups.
All experiments use 512 $\times$ 512 images.

\paragraph{Implementation Details.}
Our method is instantiated within the Gaussian Shading~\citep{DBLP:conf/cvpr/YangZCF0Y24} framework. Specifically, we adopt Gaussian Shading to synthesize the initial watermarked latent noise and to execute the final watermark verification step. Our proposed CS constraint is seamlessly integrated into both the forward generation and inverse DDIM sampling stages of this pipeline. 
For all primary experiments utilizing SDXL as the target diffusion model, we adopt default hyperparameter settings of a projection ratio of $0.4$ and a compression ratio of $0.8$. These values are selected to optimally balance forgery resistance with dual-fidelity preservation, as systematically analyzed in Section 5.4. All experiments are conducted on a single NVIDIA A40 GPU, ensuring consistent and reproducible computational conditions across all baselines and ablation studies.
% The code is available at \textbf{\textit{https://anonymous.4open.science/r/CSGuard-5740}}.

\subsubsection{Details of Assessment of Generation Fidelity.}
\label{app:fidelity}

To comprehensively evaluate generation fidelity beyond CLIP Score, we conducted both quantitative metric analysis and qualitative visual inspection. 

\paragraph{Quantitative Metrics.} 
To assess generation quality, we evaluate CSGuard across three complementary dimensions: distributional fidelity, semantic alignment, and perceptual consistency. 

We first compute FID scores against the real COCO dataset for both CSGuard-generated watermarked images and their non-watermarked counterparts. The FID difference between these two settings is only 2.48, indicating that our CS constraint introduces negligible additional distributional shift beyond the baseline generative model.
This distributional consistency is further corroborated by strong semantic alignment: complementary CLIP metrics computed against real COCO images yield an Image score of 2.889, while the CLIP Text score of 0.7536 confirms that generated images remain tightly aligned with input prompts.

Beyond aggregate metrics, we evaluate fidelity preservation across diverse stylistic domains by testing five distinct prompt categories (natural scenes, architecture, portraits, sci-fi/fantasy, and artistic styles). Compared to the baseline (Gaussian Shading without CS constraints), CSGuard maintains consistently high CLIP similarity (0.9504) and low LPIPS (0.1401) across all categories. This demonstrates that semantic content and perceptual quality are preserved uniformly, without degradation in any particular stylistic domain.

Collectively, these results form a coherent validation chain: the minimal FID gap guards against distributional drift, high CLIP scores ensure semantic fidelity, and low LPIPS captures perceptual nuances, together confirming that CSGuard's dual-fidelity design achieves high-quality generation without compromising watermark effectiveness.

\paragraph{Qualitative Visualization.} These quantitative findings are corroborated by visual inspections presented in Fig~\ref{fig:example}. All generated samples exhibit high visual fidelity, rich detail, and strong alignment with input captions, ranging from minimalist concept art to complex scenes involving lighting, texture, and compositional coherence. Crucially, no perceptible artifacts or degradation are introduced, confirming that our consistency-enforcing and dual-fidelity projection maintains the generative capacity of the underlying DMs. More importantly, watermarked images produced by CSGuard are visually indistinguishable from those of the baseline (without CS constraints), demonstrating that our minimal-perturbation consistency projection restricts CSGuard's influence on intermediate latents to negligible levels while keeping the generative process closely aligned with the original diffusion trajectory.

These results collectively validate that CSGuard achieves distributional consistency, semantic alignment, and perceptual quality without compromising the generative capability of pre-trained DMs.

\subsubsection{Additional Evaluation against Removal and Forgery.}
\label{app:add_attack}

\begin{table*}[t]
\centering
\caption{Robustness evaluation against watermark removal and forgery attacks. 
The table reports BitAcc and TPR/ASR under diverse adversarial settings. 
For removal attacks, higher BitAcc and TPR indicate stronger robustness. 
Conversely, for the forgery attack, lower BitAcc and ASR indicate superior defense.  
}
\label{tab:add_attack_eval}
\begin{small}
\begin{sc}
\scalebox{1.0}{
\begin{tabular}{c|cccc|c}
\toprule
Attack Method & DiffPure & VideoSeal & Regen(DM) & Regen(VAE) & WMCopier \\
\midrule
BitAcc(\%) & 78.83 & 63.82 & 80.79 & 82.38 & 53.5\\
TPR/ASR(\%) & 75.0 & 6.25   & 90.63 & 87.5  & 0.0\\
\bottomrule
\end{tabular}}
\end{sc}
\end{small}
% \vskip -0.1in
\end{table*}

In this section, we further evaluate the robustness of CSGuard against watermark removal attacks (including DiffPure~\citep{DBLP:conf/icml/NieGHXVA22}, VideoSeal~\citep{DBLP:journals/corr/abs-2510-20468}, and Regen~\citep{DBLP:conf/nips/ZhaoZSVGKVWL24}) and learning-based watermark forgery attacks (WMCopier~\citep{dong2025wmcopier}). Experimental results demonstrate that our method exhibits strong robustness against both removal and forgery attacks. Detailed quantitative results are presented in Table~\ref{tab:add_attack_eval}.

\textbf{DiffPure~\citep{DBLP:conf/icml/NieGHXVA22}}: This attack removes watermark signals by denoising during the diffusion process while attempting to preserve semantic content. As a latent-based scheme, CSGuard embeds watermark signals that are intrinsically coupled with the semantic representations of the image, rather than superimposing them as superficial perturbations; consequently, it exhibits inherent robustness. Empirical results show that CSGuard retains a BitAcc of 78.83\% and a TPR of 75\% after DiffPure processing, confirming its resistance to semantics-preserving removal attacks.

\textbf{VideoSeal~\citep{DBLP:journals/corr/abs-2510-20468}}: This attack is designed primarily for post-hoc watermark removal. We observe that effective watermark suppression via VideoSeal incurs severe semantic degradation: the CLIP similarity score drops from 0.3135 to 0.2485, indicating substantial utility loss. This trade-off renders the attack impractical for adversaries seeking to preserve image quality.

\textbf{Regen~\citep{DBLP:conf/nips/ZhaoZSVGKVWL24}}: This attack eliminates watermarks by reconstructing watermarked images using generative models. We evaluated CSGuard against this attack using both diffusion models and non-diffusion architectures (VAE), achieving TPRs of over 90.63\% and 87.5\% against diffusion-based and VAE-based regeneration attacks, respectively.

\textbf{WMCopier~\citep{dong2025wmcopier}}: This attack attempts to estimate and forge watermarks by collecting a large volume of watermarked images to learn the distribution of watermarked content. CSGuard effectively resists the WMCopier attack, reducing the bit accuracy of forged images to 0.535 (equivalent to random guessing) and achieving a 0\% ASR. This confirms that CSGuard maintains robustness even when the attacker attempts to fit the specific distribution of our watermarked outputs.

Collectively, these evaluations demonstrate that CSGuard delivers comprehensive robustness against both watermark removal and forgery threats, providing reliable and verifiable watermark protection under diverse and realistic adversarial scenarios.

\subsubsection{Additional Analysis Experiment}
\label{app:abaltion}

\paragraph{Analysis on FPR.}
As shown in Fig~\ref{fig:fpr}, CSGuard achieves perfect detection on benign images across all FPR thresholds. TPR remains exactly 1.00. For distorted images, the average TPR exceeds 70\% across the entire FPR range, demonstrating strong robustness to post-processing. While ASR naturally increases as the FPR threshold relaxes (due to a more permissive detector), our default operating point at FPR = $10^{-10}$  strikes a practical balance: it maintains a high TPR of about 80\% on distorted images while suppressing ASR below 30\%, thereby enabling CSGuard reliable forensic verification against forgery attack.

\paragraph{Analysis on Different DMs.} 
To further validate the generalizability of our method across diverse DMs, we evaluate the watermarking performance and image generation quality of CSGuard on four representative DMs: SDXL, PixArt~\citep{DBLP:conf/iclr/ChenYGYXWK0LL24}, SD1.5, and SD2.1. In addition to adhering to the baseline forgery attack setup~\citep{DBLP:conf/cvpr/0025LTFQ25}, which employs SD2.1 as the proxy attacker, we further analyzed CSGuard's forgery resistance under varying proxy-target model pairs.

\textit{Generalizability across Diffenert DMs}. 
CSGuard demonstrates robust applicability across different DMs, maintaining excellent watermarking performance and image quality. As shown in Table~\ref{tab:wm_model}, our method consistently achieves high watermark effectiveness on benign images across all models: bit accuracy exceeds 96.1\%, TPR remains 100.0\%, and CLIP scores stay near 0.31. These results demonstrate that the dual-fidelity design effectively retains both the generative capability of the original DMs and the effectiveness of the watermarking schema.

\textit{Forgery Resistance under Cross-architecture scenarios.}
Regarding forgery resistance, CSGuard maintains strong robustness when the target and proxy models differ architecturally. For instance, attacking PixArt-generated images with SD2.1 yields an ASR of only 6.25\%. As detailed in Table~\ref{tab:cross_model_attack}, similar robustness is observed in other cross-architecture scenarios, where the ASRs for PixArt-SDXL, PixArt-SD2.1, and SDXL-PixArt are 37.5\%, 21.87\%, and 31.25\%, respectively. These results confirm that binding generation-inversion symmetry to a secret matrix effectively thwarts unauthorized watermark forging in cross-model settings.

\textit{Vulnerable to Model-Identical Attacks.}
However, when the attacker employs a proxy model from the same family for both inversion and forgery, the ASR increases notably. For example, the ASR rises significantly to 87.50\% and 93.75\% when the attacker is SD2.1 and the target model are SD1.5 and SD2.1, respectively. This stems from the model-matching attack: the adversary uses SD2.1 for both inversion and regeneration, yielding trajectories highly aligned with CSGuard's own diffusion path. While our method preserves the original trajectory to maintain fidelity and quality, it cannot arbitrarily perturb the latent dynamics, thus leaving a vulnerability when the attacker and defender share the same model family. This highlights a fundamental trade-off: strict trajectory consistency enables high fidelity but limits robustness under model-identical attacks. Addressing this limitation is a key direction for future work.

\textit{Vulnerability Mitigation via Distance-Based Detection.} To mitigate this vulnerability, we implemented a post-hoc detection protocol inspired by PAI~\citep{DBLP:journals/corr/abs-2601-06639}, which evaluates the divergence between the extracted latent and the reference watermark to identify potential forgeries. This demonstrates that CSGuard can be straightforwardly extended to forensic scenarios for verifying image authenticity. As detailed in Table~\ref{tab:cross_model_attack}, this extended detection mechanism achieves a perfect Discrimination Accuracy (DisAcc) of 100.0\% in cross-architecture scenarios (e.g., SD2.1 attacking PixArt), effectively distinguishing benign images from forgeries. Crucially, even under the challenging Model-Identical setting (e.g., SD2.1 $\rightarrow$ SD2.1), the method maintains a DetAcc of 79.68\%. This confirms that despite the trajectory alignment increasing ASR, the latent distributions of benign and forged images remain sufficiently distinguishable to support reliable forensic verification.
Nevertheless, characterizing the  limits of trajectory-based detection and developing architecture-agnostic verification protocols constitute promising avenues for future research.

\begin{table*}[tb]
\centering
\caption{Watermarking performance under varying diffusion models. 
}
\label{tab:wm_model}
\begin{small}
\begin{sc}
\scalebox{0.86}{
\begin{tabular}{l|c|cccc|cc}
\toprule
\multicolumn{2}{c}{} &\multicolumn{4}{|c|}{Benign Image}  &\multicolumn{2}{c}{Forged image} \\ \midrule
Method &{DMs} & {BitAcc (\%)$\uparrow$} & {TPR (\%)$\uparrow$} & {CLIP Score$\uparrow$} &{TimeRatio $\downarrow$}& {BitAcc (\%)$\downarrow$} & {ASR (\%)$\downarrow$}\\
\midrule
\multirow{4}{*}{CSGuard} &SDXL   & 96.15      & 100.0      & 0.3135 &1.005$\times$ &$70.26_{25.89}$       &28.12 \\
 &PixArt & 96.25      & 100.0      & 0.3104 &1.005$\times$ & $62.91_{33.34}$       & 6.25\\
 &SD1.5 & 99.52      & 100.0      & 0.3135 &1.005$\times$ &$92.87_{6.65}$       &87.50\\
 &SD2.1 & 99.29     & 100.0    & 0.3179 &1.007$\times$ & $92.02_{7.27}$       & 93.75 \\
\bottomrule
\end{tabular}}
\end{sc}
\end{small}
% \vskip -0.1in
\end{table*}

\begin{table*}[tb]
\centering
\caption{Watermarking performance under varying attack-target configurations.  
}
\label{tab:cross_model_attack}
\begin{small}
\begin{sc}
\scalebox{1.0}{
\begin{tabular}{l l c c}
\toprule
Attack DM & Target DM & ASR (\%)$\downarrow$ & DisAcc (\%)$\uparrow$ \\
\midrule
SD2.1 & SDXL & 28.12 & 100.0 \\
SD2.1 & PixArt & 6.25 & 100.0 \\
SD2.1 & SD1.5 & 87.5  & 79.68 \\
SD2.1 & DS2.1 & 93.75 & 79.68 \\
SDXL & PixArt & 31.25 & 100.0 \\
PixArt & SDXL &37.50 & 100.0 \\
PixArt & SD2.1 & 21.87 & 100.0 \\
% SDXL & SD2.1 & 65.60 \\
\bottomrule
\end{tabular}}
\end{sc}
\end{small}
% \vskip -0.1in
\end{table*}

\paragraph{Analysis on Different Secret Matrix.}
We investigate the feasibility of employing publicly-known degradation operators as the secret matrix $\mathbf{A}$ in our watermarking framework. Specifically, drawing inspiration from image restoration literature~\citep{DBLP:conf/iclr/WangYZ23,DBLP:conf/cvpr/GarberT24} we implement three representative operators: 
(1) \textit{Pooling}: employs average pooling for spatial downsampling with nearest-neighbor interpolation for reconstruction. (2) \textit{Blur}: applies Gaussian blurring with frequency-domain Wiener filtering for deconvolution. and (3) \textit{Degradation}:  combines Gaussian blurring followed by downsampling, approximating its pseudoinverse via Tikhonov regularization.

% (1) \textit{Pooling}: We construct the matrix $\mathbf{A}$ via average pooling to perform content-preserving spatial downsampling, and approximate its pseudoinverse $\mathbf{A}^\dagger$ using nearest-neighbor interpolation for resolution restoration.
% (2) \textit{Blur}: We construct the matrix $\mathbf{A}$ via Gaussian blur, and the corresponding pseudoinverse $\mathbf{A}^\dagger$ is approximated using Wiener filtering in the frequency domain.
% (3) \textit{Degradation}: We construct the matrix $\mathbf{A}$ via  Gaussian blurring followed by downsampling, and its pseudoinverse $\mathbf{A}^\dagger$ serves as the super-resolution reconstruction operator, approximated using Tikhonov regularization.

\begin{figure}[tb]
  \begin{center}
    \centerline{\includegraphics[width=\columnwidth]{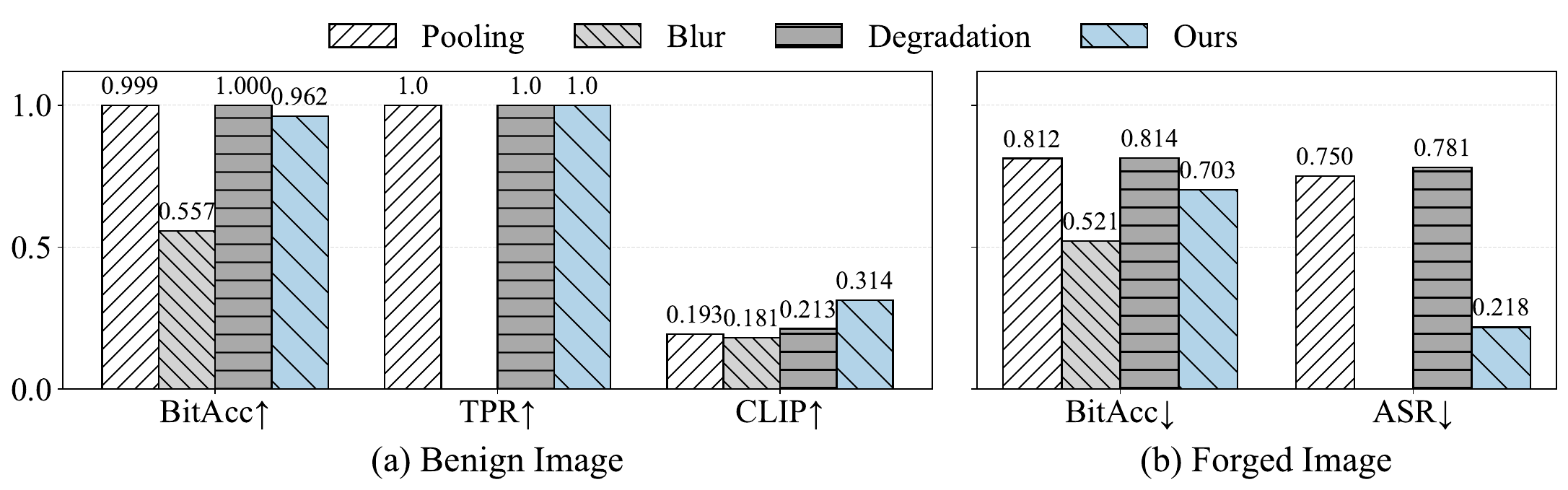}}
    \caption{TPR/ASR for benign and forged images under varying matries.}
    \label{fig:diff_A}
  \end{center}
  \vskip -0.1in
\end{figure}

\textit{Impact on Dual Fidelity}. On benign images, while public operators achieve high bit accuracy and perfect TPR, they severely compromise images' semantic quality, as evidenced by substantially lower CLIP scores (lower than 0.213) compared to our method. This significant degradation indicates that public operators introduce substantial distortions, fundamentally altering the semantic content of generated images. Crucially, this violates the dual-fidelity constraint essential for practical deployment.

\textit{Vulnerability to Forgery Attack}. These public operators fail to provide adequate resistance against forgery attacks. As shown in Fig~\ref{fig:diff_A}, watermarked images protected by public operators exhibit significantly higher bit accuracy under forgery conditions ( more than 0.812 for Pooling and Blur). Consequently, the ASR remain unacceptably high (more than0.750), whereas our method achieves a substantially lower rate of 0.218. 
Regarding the Degradation operator, it yields a bit accuracy of approximately 0.5 on forged images, leading to an ASR of 0. However, this low ASR does not imply effective resistance; instead, it confirms that the operator has completely obliterated the watermark information.

\textit{Security Analysis}. The fundamental weakness stems from their transparency: since the degradation process is publicly known, adversaries can readily construct corresponding inverse operators to reverse the protection mechanism. This violates the core security principle that the secret matrix $\mathbf{A}$ must remain private to prevent unauthorized users from exploiting the generation–inversion symmetry. 
% Our learned secret matrix addresses this vulnerability by keeping the transformation private, thereby binding the watermark embedding and verification to authorized users only.

\subsection{Additional Discussion}
\label{app:discussion}

\paragraph{Impact on Gaussianity and Generative Distribution.}
% The integration of CS constrain into DMsMark introduces an inherent tension between security constraints and preservation of the model's native generative distribution. 
While CSGuard imposes a theoretical constraint on the latent space, its design principles ensure that the semantic coherence and statistical properties of the original generative distribution are empirically preserved.

\textit{Theoretical Constraints on Latent Distribution.}
The enforcement of the CS consistency constraint, defined as $Az_{0|t} \equiv y$, restricts the intermediate latent variables $z_{0|t}$ to an affine constraint set $\mathcal{M}_{A,y}$. 
% As formalized in Lemma~\ref{lemma:lemma1}, the minimal-perturbation consistency projection modifies the component of the latent variable lying within the range space of the secret matrix $\mathbf{A}$. 
Consequently, in the subspace spanned by $\text{Range}(A)$, the latent distribution deviates from the unconditional Gaussian prior assumed by the standard denoising process. 
However, CSGuard mitigates this theoretical deviation through two key mechanisms. First, Corollary~\ref{coro:coro1} establishes that the projection operator leaves the null-space component of $z_{0|t}$ unchanged. 
Since the null space retains full flexibility, the model preserves its capacity to optimize latent representations for semantic coherence without violating the CS constraint. Second, the trajectory-intrinsic observation construction ensures that the observation vector $\mathbf{y}$ is derived from the original denoising path. 
This design guarantees that the projected latent $z'_{0|t}$ remains close to the original generative trajectory (Theorem~\ref{theorem:theorem1}), thereby preventing the significant distributional shift observed when using random observations (Proposition~\ref{proposition:proposition1}).

\textit{Empirical Preservation of Generative Characteristics.}
Empirically, the impact of these constraints on the generative distribution is negligible. As reported in Table~\ref{tab:wm_result}, CSGuard achieves a CLIP score of 0.3135, indicating that the semantic alignment between the generated images and input prompts remains intact, suggesting that the effective data manifold is not distorted. Furthermore, the FID difference between watermarked and non-watermarked images is only 2.48, and across five diverse stylistic categories, CSGuard maintains high CLIP similarity (0.9504) and low LPIPS (0.1401) compared to the baseline. Visual comparisons in Fig~\ref{fig:example} corroborate this, showing that watermarked images are visually indistinguishable from baseline outputs, with no perceptible artifacts introduced by the projection process.

\paragraph{Potential for Plug-and-Play Integration.} 
Beyond standalone deployment, CSGuard is architected with modularity in mind, suggesting potential as a lightweight, plug-and-play enhancement for existing latent-based watermarking schemes. 

Our integration experiments with representative methods such as Tree-Rings, GaussMarker, and PRCMark indicate promising directions for improving forgery resistance without altering their core pipelines. For instance, when layered atop Tree-Rings, the statistical significance (p-value) on forged images rises from 0 to 0.2585, crossing the conventional detection threshold of 0.05. Similarly, integration with GaussMarker and PRCMark suggests a widening of the bit accuracy gap between benign and forged samples by approximately 0.1848 and 0.2534, respectively. These observations imply that enforcing CS constraints can break the generation-inversion symmetry exploited by adversaries, even when applied to distinct watermarking backbones.

While these initial results highlight the method's extensibility, we acknowledge that seamless integration with learnable frameworks (e.g., GaussMarker) may require careful architectural adaptations to optimize the integration of CS constraints. 
Nevertheless, these findings underscore CSGuard’s potential as a versatile add-on, and we plan to systematically explore its compatibility and optimal coupling strategies with diverse baselines as a key direction for future work.

\paragraph{Broader Impact.}
CSGuard addresses a critical security vulnerability in latent-based diffusion model watermarking, significantly enhancing the reliability of AI-generated content attribution. By breaking the generation–inversion symmetry exploited by forgery attacks, it prevents malicious actors from falsely attributing deepfakes or harmful content to innocent users. This strengthens intellectual property protection, mitigates misinformation campaigns, and supports trustworthy digital forensics. Furthermore, its training-free design and negligible computational overhead (<1\%) enable practical deployment without restricting access to open generative models. 
The method does not introduce new generative models or datasets and has
no negative impact on society.

%%%%%%%%%%%%%%%%%%%%%%%%%%%%%%%%%%%%%%%%%%%%%%%%%%%%%%%%%%%%

% \newpage
% \input{checklist.tex}

\end{document}